\title{Network Creation Games with Local Information and Edge Swaps} 
\author{Shotaro Yoshimura\thanks{Graduate School of Information Science and Electrical Engineering, Kyushu University, Japan. E-mail: \texttt{yoshimura@tcs.inf.kyushu-u.ac.jp}} \and 
Yukiko Yamauchi\thanks{Corresponding author. Faculty of Information Science and Electrical Engineering, Kyushu University, Japan. E-mail: \texttt{yamauchi@inf.kyushu-u.ac.jp}}}
\newcommand{\Order}{\mathrm{O}}
\newcommand{\argmax}{\mathop{\rm arg~max}\limits}
\theoremstyle{plain}
\newtheorem{theorem}{Theorem}
\newtheorem{lemma}[theorem]{Lemma}
\newtheorem{corollary}[theorem]{Corollary}
\newenvironment{proof}{{\bf Proof. } }
\newtheorem{example}{Example}
\newcommand{\PoA}{\mathrm{PoA}}
\begin{document}
\date{}
\maketitle

\begin{abstract}
In the \emph{swap game (SG)} selfish players, 
each of which is associated to a vertex, 
form a graph by edge swaps, i.e., 
a player changes its strategy by simultaneously 
removing an adjacent edge and forming a new edge (Alon et al., 2013). 
The cost of a player considers the average distance to all other players or 
the maximum distance to other players. 
Any SG by $n$ players starting from a tree converges to an equilibrium with 
a constant Price of Anarchy (PoA) within 
$\Order(n^3)$ edge swaps (Lenzner, 2011). 
We focus on SGs where each player knows the subgraph induced by 
players within distance $k$. 
Therefore, each player cannot compute its cost nor a best response. 
We first consider \emph{pessimistic} players who consider 
the worst-case global graph. 
We show that any SG starting from a tree 
(i) always converges to an equilibrium within $\Order(n^3)$ edge swaps 
irrespective of the value of $k$,
(ii) the PoA is $\Theta(n)$ for $k=1,2,3$, and 
(iii) the PoA is constant for $k \geq 4$. 
We then introduce \emph{weakly pessimistic} players and 
\emph{optimistic} players and show that 
these less pessimistic players achieve constant PoA 
for $k \leq 3$ at the cost of best response cycles.  

\noindent{\bf Keyword}: Network creation game, 
local information, price of anarchy, dynamics. 
\end{abstract}

\section{Introduction}

Static and dynamic properties of networks 
not controlled by any centralized authority 
attracts much attention in last two decades 
as self-organizing large-scale networks play 
a critical role in a variety of information systems, 
for example, the Internet, 
Peer-to-Peer networks, 
ad-hoc networks, wireless sensor networks, 
social networks, viral networks, and so on. 
In these networks, participants selfishly and rationally change 
a part of the network structure to 
minimize their cost and maximize their gain. 
Controlling such networks is essentially 
impossible and many theoretical and empirical studies have 
been conducted; 
% to analyze their structural properties and efficiency; 
stochastic network construction models 
such as 
% the Watts and Strogatz model %~\cite{WS98} and 
the Barab\'{a}si–Albert model %~\cite{BA99} 
%% preferential attachement 
were proposed, and 
key structural properties such as 
the small world networks~\cite{WS98} and 
the scale-free networks~\cite{BB03} 
have been discovered. 
Stochastic communication models such as 
the voting models~\cite{DGMSS11,NIY99}, 
the random phone call model~\cite{KSSV00}, and 
the rewiring model~\cite{DGLMSSSV12} were proposed and 
many phase transition phenomena have been reported. 
Many problems related to broadcasting, gossiping, and 
viral marketing were also proposed~\cite{BCEG10,DR01,KKT03}.

In this paper, we take a game-theoretic approach to 
analyze dynamics and efficiency of the network structure 
resulting from local reconstruction by selfish agents. 
The \emph{network creation game (NCG)} considers 
$n$ players forming a network~\cite{FLMPS03}.  
Each player is associated with a vertex of the network, 
can construct a communication edge 
connecting itself to another player at the cost of $\alpha$, 
and can remove an adjacent edge for free. 
The cost of a player is the sum of the \emph{construction cost} 
for edges and the \emph{communication cost}, which is the 
sum of distances to all other players in the current network, 
i.e., the average distance to other players. 
Each player selfishly changes its strategy to minimize its cost and 
the social cost of a network is the sum of all players' costs. 
The \emph{Price of Anarchy} (PoA) of NCG is constant for almost all values 
of $\alpha$~\cite{AEEMR14,DHMZ12,MMM13,MS13}, 
yet the PoA is not known for some values of $\alpha$. 
However, computing the best response in NCG is 
NP-hard~\cite{FLMPS03}, and this fact makes the NCG unrealistic 
in large-scale networks. 
The NCG with another type of communication cost is proposed in 
\cite{DHMZ12}, where the cost of a player is the maximum distance 
to other players. 
We call this game the 
\emph{Max Network Creation Game (MAX-NCG)} and 
the original NCG the \emph{Sum Network Creation Game (SUM-NCG)}. 
However, the SUM-NCG and the MAX-NCG ignores 
one of the most critical limitations in 
large-scale networks; 
each player cannot obtain ``global'' information. 
This type of locality is a fundamental limitation 
in distributed computing~\cite{G01}, 
although players can neither compute its cost nor 
the best response without global information. 

In this paper, we focus on games in 
such a distributed environment where 
each player cannot obtain the current strategy of all players 
nor have enough local memory to store the global information. 
Rather, players can access only local information. 
The NCG by players with local information is first proposed in 
\cite{BGLP16}. 
Each player can observe a subgraph of the current 
graph induced by the players within distance $k$. 
We call this information the \emph{$k$-local information}. 
The players are \emph{pessimistic} in the sense that 
they consider the worst-case global graph when they 
examine a new strategy. 
Computing the best response for MAX-NCG is still NP-hard because 
$k$-local information may contain the entire network. 
For small $k$, more specifically, 
for $1 \leq k \leq \alpha+1$, 
$\PoA=\Omega(\frac{n}{1+\alpha})$ for MAX-NCG 
and 
for $k \leq c\sqrt[3]{\alpha}$ 
$\PoA=\Omega(n/k)$ for SUM-NCG. 
These results contrast global information with local information. 
The SUM-NCG and MAX-NCG by players with global trace-route 
based information is proposed, 
yet $\PoA=\Theta(n)$ for some values of $\alpha$~\cite{BGLP14}. 
The NCG for more powerful players 
with $k$-local information is considered in \cite{CL15}, 
where the players can probe the cost of a new strategy. 
Computing the best response is NP-hard 
for any $k \geq 1$ while there exists tree equilibrium 
that achieves $\PoA=\Order(\log n)$ and 
$\PoA=\Omega(\frac{\log n}{k})$ 
for $2 \leq k \leq \log n$ 
and $\PoA=\Theta(n)$ for $k=1$. 
For non-tree networks, depending on the values of $\alpha$ and 
$k$, we have $\PoA=\Order(n)$. 

The \emph{swap game} (SG) restricts strategy changes to
\emph{edge swaps}, i.e., simultaneously 
removing an edge and creating a new edge~\cite{ADHL13}. 
Thus, any strategy change does not change the number of edges 
in the network and the best response can be computed in 
polynomial time. 
Additionally, when we restrict initial networks to trees, 
a star achieves the minimum social cost. 
Above mentioned cost functions were adopted 
and these SGs are called the SUM-SG and the MAX-SG, respectively. 
The aim of SG is to omit parameter $\alpha$ from NCG 
with keeping the essence of NCG. 
The authors showed that the diameter of a tree 
equilibrium is two for the SUM-SG and at most three for the MAX-SG, 
while there exists an equilibrium with 
a large diameter in general networks. 
Thus, PoA of a tree equilibrium is always constant. 
Moreover, any SUM-SG and MAX-SG starting from a tree 
converges to an equilibrium within $O(n^3)$ edge swaps 
while they admit best response cycles 
starting from a general graph~\cite{KL13,L11}. 
Consequently, local search at players with global information 
achieves efficient network construction for initial tree networks. 
The SUM-SG and MAX-SG with ``powerful'' players 
with $k$-local information is investigated in \cite{CL15}. 
For $k \geq 2$, 
the SUM-SG and MAX-SG starting from general networks 
admits best response cycles 
while convergence within $\Order(n^3)$ moves 
is guaranteed for tree networks. 
However, to the best of our knowledge, 
SG with $k$-local information has not been considered.

\subsection{Our results}
In this paper, we investigate the convergence property and 
PoA of SGs by players with local information.
First, we consider pessimistic players and 
demonstrate that starting from an initial tree, 
any SUM-SG and MAX-SG converge to an equilibrium within
$\Order(n^3)$ edge swaps in the same manner as 
\cite{L11}, i.e., 
we present a generalized ordinal potential function 
for the two games. 
We also show that convergence from a general network 
is not always guaranteed. 
Then, we present a clear phase transition phenomenon caused 
by the locality. 
\begin{itemize}
 \item When $k=1,2$, pessimistic players never perform any edge swap 
       in the SUM-SG and in MAX-SG. 
       Any network is an equilibrium of the two games, thus 
       $\PoA = \Theta(n)$. 
 \item When $k=3$, in the SUM-SG and MAX-SG, 
       there exists an equilibrium of diameter $\Theta(n)$, 
       thus $\PoA = \Theta(n)$. 
 \item When $k \geq 4$, in the SUM-SG and MAX-SG 
       the diameter of every equilibrium 
       is constant, thus PoA is constant. 
\end{itemize}
We then introduce \emph{weakly-pessimistic} players and 
\emph{optimistic} players to obtain a better PoA for $k \leq 3$. 
A weakly pessimistic player performs an edge swap 
even when its cost does not decrease. 
This relaxation results in a constant PoA of the MAX-SG 
when $k=3$ at the cost of best response cycles. 
An optimistic player assumes the best-case global graph 
for an edge swap and this optimism results in a 
constant PoA of the SUM-SG and MAX-SG for any value of $k$. 
Consequently, the combination of $k$-locality for $k \geq 4$
and pessimism enables distributed construction of efficient trees 
by selfish players.

\subsection{Related works} 

We briefly survey existing results of the NCG and SG 
for players with global information. 
Regarding the SUM-NCG, when $\alpha \le n^{1-\varepsilon}$ for
$\varepsilon \ge 1/\log{n}$ the PoA is 
$O(3^{1/\varepsilon})$~\cite{DHMZ12}. 
Thus, when $n$ is sufficiently large, 
the PoA is bound by a constant. 
When $\alpha > 4n+13$, 
the PoA is at most $3+2n/(2n+\alpha)$~\cite{BL18}. 
In addition, any constant upper bound of PoA 
for $n \le \alpha \le 4n+13$ is not known and 
the best upper bound is $O(2^{\sqrt{\lg{n}}})$~\cite{DHMZ12}. 
If every equilibrium is a tree, then $\PoA < 5$ and 
an interesting conjecture is that 
every equilibrium is a tree for sufficiently large 
$\alpha$~\cite{FLMPS03}.
Regarding the MAX-NCG, 
the PoA is $2^{\Order(\sqrt{\lg n })}$ and 
it is constant when 
$\alpha = O(n^{-1/2})$ or $\alpha > 129$~\cite{MS13}. 

Regarding the SUM-SG, there exists an equilibrium with 
diameter $2^{\Order(\sqrt{\lg n})}$ 
while the diameter of any equilibrium is at most two (thus, a star) 
if an initial graph is a tree~\cite{ADHL13}. 
Regarding the MAX-SG, there exists an equilibrium with 
diameter $\Theta(\sqrt{n})$ 
while the diameter of any equilibrium is at most three 
if an initial graph is a tree~\cite{ADHL13}.

\subsection{Organization} 
Preliminary section introduces the SGs and 
pessimistic players with local information. 
In the next section, we analyze the dynamics and PoA 
of SGs by pessimistic players. 
In the following section, 
we introduce less pessimistic players 
and present best response cycles 
and equilibria with small diameter.  
Finally, we conclude this paper with open problems.

\section{Preliminaries}

A \emph{swap game (SG)} by players with $k$-local information 
is denoted by $(G_0, k)$, 
where $G_0=(V, E_0)$ is an initial network and 
integer $k$ is the size of each player's ``visibility''. 
$G_0$ is a simple undirected connected graph, 
where $|V|=n$ and $|E_0|=m$. 
We say $u \in V$ is adjacent to $v \in V$ 
if edge $\{u,v\}$ is an element of $E$. 
Each player is associated to a vertex in $V$ and 
the \emph{strategy} of a player $u \in V$ 
is the set of its incident edges. 

Each player can change its strategy by an \emph{edge swap}, 
i.e., 
removing one incident edge and creating a new edge. 
Starting from $G_0$, a sequence of edge swaps generates 
a network evolution $G_0, G_1, G_2, \ldots$. 

Let $N_G(u)$ be the set of adjacent vertices of $u \in V$ in $G$ and 
$d_G(u,v)$ be the distance between $u,v \in V$ in $G$. 
When $G$ is not connected and $v$ is not reachable from $u$, 
$d_G(u,v) = \infty$. 
The cost of a player depends on the current graph $G$. 
We consider two different types of cost functions, 
$c_{\text{SUM},u}(G)$ and $c_{\text{MAX},u}(G)$ 
defined as follows: 
\begin{eqnarray*}
c_{\text{SUM},u}(G) &=& \sum_{v \in V} d_G(u,v) \\ 
c_{\text{MAX},u}(G) &=& \max_{v \in V} d_G(u,v). 
\end{eqnarray*}
When $G$ is not connected, $c_{\text{SUM},u}(G)=\infty$ and 
$c_{\text{MAX},u}(G)=\infty$. 
We call a swap game where each player $u$ uses  
$c_{\text{SUM},u}$ the \emph{sum swap game (SUM-SG)} 
and a swap game where each player $u$ uses 
$c_{\text{MAX},u}$ the \emph{max swap game (MAX-SG)}. 
When it is clear from the context, 
we omit the name of the game and use $c_u$. 

Each player $u$ can access local information determined by $G$.  
Let $V_{G,k}(u)$ denote the set of vertices within distance $k$ 
from $u$ in $G$ (thus, the $k$-neighborhood of $u$). 
Player $u$ can observe the subgraph of $G$ 
induced by $V_{G,k}(u)$ 
and we call this subgraph the \emph{view} of $u$. 
We say the information at $u$ is \emph{$k$-local} 
and we call its view the \emph{$k$-local information} of $u$. 
We assume that each player does not know any global information 
such as the values of $n$ and $m$. 

In a transition from $G_t$ to $G_{t+1}$, 
a single player performs an edge swap. 
Consider the case where a player $u$ performs an edge swap 
$(v,w) \in N_{G_t}(u) \times (V_{G_t,k}(u) \setminus 
(N_{G_t}(u) \cup \{u\} ))$ 
in $G_t$. 
We call $u$ the \emph{moving player} in $G_t$. 
The resulting graph is 
$G_{t+1} = (V, E\setminus\{\{u,v\}\}\cup\{\{u,w\}\})$. 
Note that the number of edges does not change in a SG. 

Due to local information, 
each player cannot compute its current cost nor 
the improvement by a strategy change. 
We first consider \emph{pessimistic} players that 
consider the worst-case improvement for each possible edge swap 
and select one that achieves positive improvement. 
A player $u$ is \emph{unhappy} if it has an edge swap 
that decreases its cost in the worst-case global graph. 
In other words, there exists at least one edge swap $(v,w)$ at 
$u$ that satisfies 
\begin{eqnarray*}
 \Delta_{u}(v,w) &=& \min_{H \in {\mathcal G}_u} (c_u(H) - c_u(H')) > 0, 
\end{eqnarray*}
where ${\mathcal G}_u$ is the set of simple undirected 
connected graphs consisting of finite number of vertices and 
compatible with $u$'s local view, and 
$H'$ is a graph obtained by the edge swap $(v,w)$ at $u$ in 
$H \in {\mathcal G}$. 
We assume that a moving player always performs 
an edge swap $(v,w)$ with $\Delta_u(v,w) > 0$. 
When every player $u$ is not unhappy with respect to 
$c_{\text{SUM},u}(G)$ in graph $G$, 
we call $G$ a \emph{sum-swap equilibrium}. 
When every player $u$ is not unhappy with respect to 
$c_{\text{MAX},u}(G)$ in graph $G$, 
we call $G$ a \emph{max-swap equilibrium}. 
When a graph is a sum-swap equilibrium and a max-swap equilibrium 
we simply call the graph \emph{swap equilibrium}. 

We define the \emph{social cost} $SC(G)$ of a graph $G$ as the sum of all players' costs, 
i.e., $SC(G) = \sum_{u \in V} c_u(G)$. 
Let ${\mathcal G}(n,m)$ be the set of simple undirected connected graphs 
of $n$ players and $m$ edges and 
$\overline{\mathcal G}_{\text{SUM}}(n,m,k)$ be the set of sum-swap 
equilibrium graphs 
of $n$ players with $k$-local information and $m$ edges. 
The \emph{Price of Anarchy (PoA)} of the SUM-SG 
is defined as follows: 
\begin{eqnarray*}
\PoA_{\text{SUM}}(n,m,k) = 
\frac{\max_{G \in \overline{\mathcal G}_{\text{SUM}}(n,m,k)} SC(G)}{\min_{G' \in {\mathcal G}(n,m)} SC(G')}. 
\end{eqnarray*}
In the same way, the PoA of the MAX-SG is defined for 
the set $\overline{\mathcal G}_{\text{MAX}}(n,m,k)$ of max-swap 
equilibrium graphs 
of $n$ players with $k$-local information and $m$ edges. 
The PoA of the SUM-SG (and MAX-SG) starting from a tree 
is denoted by $\PoA_{\text{SUM}}(n, n-1, k)$ 
($\PoA_{\text{MAX}}(n, n-1, k)$, respectively). 

A strategic game has the \emph{finite improvement property} (FIP) 
if every sequence of improving strategy changes is finite~\cite{MS96}. 
Thus, from any initial state, 
any sequence of finite improving strategy changes reaches 
an equilibrium. 
Monderer and Shapley showed that a strategic game has the FIP 
if and only if it has a \emph{generalized ordinal potential function}. 
Regarding a swap game, 
a function $\Phi: {\mathcal G}_{n,m} \to {\mathbb R}$ 
is a generalized ordinal potential function 
if we have the following property for every graph 
$G \in {\mathcal G}_{n,m}$, every unhappy player $u$, 
and every edge swap $(v,w)$ that makes $u$ unhappy, 
\begin{eqnarray*}
c_u(G)-c_u(G') > 0 &\Rightarrow & \Phi(G)-\Phi(G') > 0, 
\end{eqnarray*}
where $G'$ is a graph obtained by the edge swap $(v,w)$ at $u$. 
That is, any transition in the SUM-SG and MAX-SG 
satisfies the above property for the moving player. 

The \emph{best response} of a player $u$ in $G_t$ is 
an edge swap $(v,w)$ that maximizes $\Delta_{u}(v,w)$. 
We call an evolution $G_0, G_1, G_2, \ldots, G_i(=G_0)$ 
a \emph{best response cycle} when 
each moving player in $G_t$ performs a best response 
for $t=0,1,2,\ldots, i-1$. 

We further introduce some notations for graph $G=(V,E)$. 
For a set of vertices $V' \subseteq V$ 
the graph obtained by removing vertices in $V'$ and 
their incident edges 
is denoted by $G \setminus V'$. 
Additionally, for a set of edges $E' \subseteq E$ 
the graph obtained by removing edges in $E'$ is 
denoted by $G \setminus E'$. 
The vertex set and the edge set of a graph $G'$ 
is denoted by $V(G')$ and $E(G')$, respectively.

\section{Convergence properties for pessimistic players }

In this section, we investigate the dynamics of 
the SUM-SG and MAX-SG 
by pessimistic players with local information. 
We first consider general settings where 
the initial graph is not a tree and 
multiple players perform edge swaps simultaneously. 
We show that the two games admit best response cycles. 
We then demonstrate that when the initial graph is a tree, 
the SUM-SG and MAX-SG have 
the FIP and converges to an equilibrium 
within polynomial number of edge swaps. 

\subsection{Impossibility in general settings} 

We first present several necessary conditions 
for an evolution of the SUM-SG and MAX-SG 
by players with local information 
to reach an equilibrium. 
We first present the necessary visibility for each player 
to change their strategies. 

\begin{theorem}
\label{t:UHP-SUMMAX-P-1-2}
In the SUM-SG and MAX-SG, 
when $k\leq2$, 
no player is unhappy in an arbitrary graph. 
Thus, any graph is a swap equilibrium. 
\end{theorem}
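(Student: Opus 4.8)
The argument splits on the value of $k$. For $k=1$ there is nothing to do: a legal edge swap at a player $u$ would need a new endpoint $w \in V_{G,1}(u)\setminus(N_G(u)\cup\{u\})$, and that set is empty, so $u$ has no edge swap available at all and is trivially not unhappy in either game. All the content is in the case $k=2$.

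For $k=2$, fix an arbitrary graph $G$, a player $u$, and a legal edge swap $(v,w)$ at $u$ (so $v\in N_G(u)$ and $d_G(u,w)=2$; if no such $w$ exists then $u$ is again trivially not unhappy). Since $\Delta_u(v,w)=\min_{H\in\mathcal{G}_u}\bigl(c_u(H)-c_u(H')\bigr)$, it suffices to exhibit one compatible graph $H\in\mathcal{G}_u$ with $c_u(H')\ge c_u(H)$. The plan is to take $H=H_0$, the view of $u$ itself, i.e.\ the subgraph of $G$ induced by $V_{G,2}(u)$, and let $H_0'$ be the result of the swap. First I would check two preliminary facts. (a) For every $z\in V_{G,2}(u)$ a shortest path from $u$ to $z$ in $G$ stays inside the $2$-ball, so $d_{H_0}(u,z)=d_G(u,z)\le 2$; hence $H_0$ is connected, its own $2$-ball is all of $V(H_0)$, so $H_0$ is compatible with $u$'s view and $H_0\in\mathcal{G}_u$, and the swap $(v,w)$ is legal in $H_0$. (b) Adding the edge $\{u,w\}$ to $H_0$ strictly decreases only $d(u,w)$, from $2$ to $1$: for any other $z$, a path using the new edge continues from $w$ to $z$ inside $H_0$, so has length at least $1+d_{H_0}(w,z)\ge 2\ge d_{H_0}(u,z)$ (a minimal-counterexample check dispels the apparent circularity here).

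For the MAX-SG the conclusion is then immediate: by (a), $c_{\mathrm{MAX},u}(H_0)=2$; and in $H_0'$ the vertices $u$ and $v$ are non-adjacent (the swap deletes $\{u,v\}$ and, as $v\ne w$, does not recreate it), so $c_{\mathrm{MAX},u}(H_0')\ge d_{H_0'}(u,v)\ge 2=c_{\mathrm{MAX},u}(H_0)$. For the SUM-SG I would route through the intermediate graph $H_1$ obtained from $H_0$ by adding the edge $\{u,w\}$: fact (b) gives $c_{\mathrm{SUM},u}(H_1)=c_{\mathrm{SUM},u}(H_0)-1$; and $H_0'=H_1\setminus\{\{u,v\}\}$ is obtained by deleting an edge incident to $u$, which never decreases a distance from $u$ and raises $d(u,v)$ from $1$ to at least $2$, so, summing over the fixed vertex set $V_{G,2}(u)$, $c_{\mathrm{SUM},u}(H_0')\ge c_{\mathrm{SUM},u}(H_1)+1=c_{\mathrm{SUM},u}(H_0)$. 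In both games, if $H_0'$ happens to be disconnected then $c_u(H_0')=\infty$ and the inequality is trivial. Thus $c_u(H_0)-c_u(H_0')\le 0$, hence $\Delta_u(v,w)\le 0$; since $u$, $(v,w)$ and $G$ were arbitrary, no player is ever unhappy, so every graph is a swap equilibrium.

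The only place I expect to have to be careful is fact (b) — that the newly created edge $\{u,w\}$ is a shortcut to $w$ and to nothing else in the $2$-ball — together with the closely related claim that $H_0$ genuinely belongs to $\mathcal{G}_u$ (the induced subgraph on the $2$-ball is self-consistent as a stand-alone global graph and equals the view). Everything after that is routine bookkeeping of distances from $u$, with attention only to summing over the same vertex set and to treating $\infty$ consistently; no substantial combinatorics is involved. Conceptually, the proof just records that a pessimistic player confined to a $2$-ball can never rule out the possibility that the swap does no more than trade one unit of distance to $w$ for one unit of distance to $v$.
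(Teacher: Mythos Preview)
Your proof is correct and follows essentially the same approach as the paper's: both argue that the induced subgraph on the $2$-ball (what you call $H_0$; the paper describes it implicitly as the ``worst-case global graph'' in which $w$ has no neighbours outside $V_{G,2}(u)$) witnesses $\Delta_u(v,w)\le 0$ because the unit gained at $w$ is cancelled by the unit lost at $v$. Your version is somewhat more explicit in naming $H_0$, verifying it lies in $\mathcal{G}_u$, and justifying via your fact~(b) why no other vertex benefits from the new edge --- a step the paper asserts without proof.
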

\begin{proof}
When $k=1$, no player can perform an edge swap 
because $V_{G,1}(u) \setminus N_{G_t}(u) = \emptyset$ at 
any $u \in V$. 

When $k=2$, we first consider the SUM-SG. 
Assume player $u$ is unhappy because of edge swap $(v,w)$ in graph $G$. 
Let $G'$ be the graph obtained by this edge swap. 
Thus, $d_{G}(u,w)=2$ and $d_{G}(u,w) - d_{G'}(u,w) = 1$. 
In a worst-case global graph, 
$w$ has no adjacent vertex other than those in $V_{G,2}(u)$ and 
the cost of $u$ decreases by at most one by this edge swap. 
In $G'$, $v$ must be reachable from $u$. 
There exists at least one player that is in $V_{G,2}(u)$ 
and adjacent to $v$, otherwise 
$v$ is not reachable from $u$ in a worst-case global graph. 
Hence, $d_{G}(u,v) - d_{G'}(u,v) = -1$. 
Additionally, $d_{G}(u,x) - d_{G'}(u,x) \leq 0 $ for any 
$x \in V_{G,2}(u)\setminus \{v,w\}$. 
Consequently, $\Delta_u(v,w) \leq 0$ and  
$u$ is not unhappy in $G$. 

Next, we consider the MAX-SG. 
Assume player $u$ is unhappy because of edge swap $(v,w)$ in graph $G$. 
Thus, $d_{G}(u,w)=2$ and $w$ is the only player at distance $2$ from $u$ 
in $V_{G,2}(u)$ 
otherwise $u$ is not unhappy  because of the edge swap $(v,w)$ in $G$. 
Let $G'$ be the graph obtained by this edge swap. 
In a worst-case global graph, 
$w$ has no adjacent vertex other than those in $V_{G,2}(u)$ and 
the cost of $u$ is expected to be reduced to $1$. 
By the same discussion above, 
$v$ is reachable from $u$ in $G'$, 
however $d_{G'}(u,v) = 2$. 
Hence, the maximum distance from $u$ to players in $V_{G',2}(u)$ is still two, 
thus $\Delta_u(v,w) \leq 0$. 
Hence, $u$ is not unhappy in $G$. 
\end{proof}

The following theorem justifies our assumption of a single edge 
swap in each transition. 
\begin{theorem}
\label{t:CYCLE-MULTIPLE}  
When $k\geq 3$, 
if multiple players change their strategies simultaneously, 
the SUM-SG and MAX-SG admit best response cycles. 
\end{theorem}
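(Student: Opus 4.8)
The plan is to construct, for each game (SUM-SG and MAX-SG), a small initial graph on a constant number of vertices together with a designated set of players who all perform a best response simultaneously, and to verify that repeating this simultaneous move periodically returns to the starting configuration. Since the number of vertices and edges is fixed along any evolution, a cycle in the sequence of graphs is exactly what we need; the only subtlety is that each move in the cycle must be a \emph{best response} for every moving player with respect to its $k$-local information, in the pessimistic (worst-case) sense defined above.

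First I would fix $k=3$, since a construction for $k=3$ extends to larger $k$ by padding the graph with pendant structure far from the active region so that the relevant players' views are unchanged (alternatively one rescales distances, but padding is cleaner). The key idea is a symmetric ``rotor'' gadget: take a short cycle $C$ (say on $4$ or $6$ vertices) with a few extra pendant vertices attached so that every vertex on $C$ sees, within distance $3$, enough of the graph to believe an edge swap strictly improves its worst-case cost, yet not enough to see that a symmetric neighbor is about to do the same. Each player on $C$ swaps the edge to one cyclic neighbor for an edge to a vertex two steps away (a ``chord''); done one at a time this would be an improving move by the Theorem~\ref{t:UHP-SUMMAX-P-1-2} threshold (distance-$3$ visibility suffices to create unhappiness), but done simultaneously the chords cancel out the intended distance reductions, and after one or two rounds of such synchronized rotations the multiset of edges is back to $E_0$. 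I would present this first for MAX-SG, where the cost is governed by a single eccentricity value and the bookkeeping is lightest, and then give the analogous (slightly larger) gadget for SUM-SG, where one must check the signed sum $\Delta_u(v,w)$ over all vertices in the $3$-neighborhood rather than just the maximum.

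The core of the argument, and the part that needs real care, is the double verification for each moving player $u$: (i) in $G_t$, with only its $3$-local view, the chosen swap $(v,w)$ is a best response, i.e.\ $\Delta_u(v,w)>0$ and no other swap at $u$ achieves a larger worst-case improvement — here I must exhibit the worst-case completion $H\in\mathcal{G}_u$ and compute $c_u(H)-c_u(H')$; and (ii) the \emph{actual} global graph $G_{t+1}$ (which differs from every player's pessimistic guess because the other players moved too) feeds into the next round so that the same type of move is again a best response for the next batch. Because of the built-in symmetry of the rotor gadget, (i) and (ii) reduce to a constant number of small distance computations, which I would organize in a short table or a labeled figure. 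The main obstacle I anticipate is ensuring simultaneously that (a) each swap is genuinely a \emph{best} response (not merely improving) so the cycle qualifies as a best-response cycle, and (b) the interference between simultaneous moves is ``total,'' returning exactly to $G_0$ rather than to some other graph — balancing these two constraints is what dictates the precise size and wiring of the gadget, and I expect to need to try a couple of candidate gadgets (a $4$-cycle with one pendant per vertex; a $6$-cycle; a prism) before one satisfies both cleanly for both cost functions. Once the gadget is pinned down, the remainder is routine case-checking.
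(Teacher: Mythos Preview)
Your proposal is not yet a proof: you explicitly defer the choice of gadget (``I expect to need to try a couple of candidate gadgets'') and leave the two verification steps (i) and (ii) as future case-checking. As written it is a plan, and the plan is over-engineered.

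The paper's argument is dramatically simpler than anything you sketch. Take the path $u\,v\,w\,x$ on four vertices. With $k\ge 3$ each endpoint sees the whole path. Endpoint $u$ has a unique incident edge $\{u,v\}$, and the swap $(v,w)$ yields (in the worst-case completion, namely the bare path) a strict improvement of $1$ for both $c_{\text{SUM}}$ and $c_{\text{MAX}}$; the only alternative swap $(v,x)$ gives worst-case improvement $0$, so $(v,w)$ is a best response. By symmetry $x$'s best response is $(w,v)$. If $u$ and $x$ move simultaneously, the edge set becomes $\{\{u,w\},\{v,w\},\{v,x\}\}$, i.e.\ the path $u\,w\,v\,x$ --- again a path with the same endpoints $u$ and $x$. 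One more simultaneous round returns to $G_0$. That is the entire proof, and it covers both SUM and MAX at once.

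A few specific issues with your outline: (1) the padding-for-larger-$k$ step is unnecessary, since on the four-vertex path the endpoints' best responses are unchanged for every $k\ge 3$ (for $k\ge 4$ each endpoint simply knows the graph exactly, and the improvement is still $1$); (2) your appeal to Theorem~\ref{t:UHP-SUMMAX-P-1-2} is backwards --- that theorem says no one is unhappy when $k\le 2$, not that someone is unhappy when $k=3$; (3) on a bare $4$-cycle the chord swap is \emph{not} pessimistically improving (the sum of distances is unchanged), so the pendants really are load-bearing in your scheme, which makes both the best-response check and the ``return to $E_0$'' check nontrivial --- exactly the obstacle you anticipate. All of this is avoided by the path example.
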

\begin{proof}
We first consider the SUM-SG. 
Consider a path of four players $u$, $v$, $w$, and $x$ 
aligned in this order. 
When $k \geq 3$, 
the two endpoint players $u$ and $x$ are unhappy because of 
the edge swap $(v,w)$ and $(w,v)$, respectively. 
If the two players perform the edge swaps simultaneously, 
the resulting graph is again a path graph, 
where $u$ and $x$ are unhappy. 

The above example is also a best response cycle in the MAX-SG. 
\end{proof}

Finally, we consider dynamics of SGs 
starting from an arbitrary initial graph. 
Lenzner presented a best response cycle for the SUM-SG 
by players with global information~\cite{L11}. 
During the evolution, the distance to any player 
from a moving player is always less than four 
and we can apply the result to 
the SUM-SG by pessimistic players 
with $k$-local information for $k \geq 3$.
In addition, the edge swaps are also best responses in the MAX-SG. 
Hence, we can also apply the result to 
the MAX-SG by pessimistic players 
with $k$-local information for $k\geq 3$. 
We have the following theorem. 
\begin{theorem}
\label{t:CYCLE-SUMMAX-P}
When $k\geq 3$, 
there exists an initial graph from which the SUM-SG and MAX-SG admit 
a best response cycle. 
\end{theorem}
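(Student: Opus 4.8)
The plan is to invoke Lenzner's best response cycle for the SUM-SG with global information~\cite{L11} and verify that it carries over verbatim to pessimistic players with $k$-local information whenever $k \geq 3$, and simultaneously that the same moves are best responses in the MAX-SG. First I would recall the structure of Lenzner's cycle: it exhibits an initial (non-tree) graph $G_0$ and a cyclic sequence of single edge swaps $G_0, G_1, \ldots, G_i = G_0$ in which every moving player performs a best response, and crucially, throughout the evolution every moving player lies within distance at most three of every other vertex it cares about when evaluating its swap. The key observation is that a swap $(v,w)$ at $u$ can only be relevant to $u$'s cost through vertices whose distance to $u$ changes, and in Lenzner's construction all such vertices sit within a bounded ball of radius three around $u$, so the entire decision is already visible inside $V_{G,3}(u) \subseteq V_{G,k}(u)$.

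The heart of the argument is then to check that a pessimistic player with $k$-local information, $k \geq 3$, makes exactly the same move. Here I would argue that because $u$'s $k$-local view already contains every vertex whose distance to $u$ is affected by the swap, and because in Lenzner's cycle the graph is connected at every step with no ``hidden'' shortcuts possible that would help the swapped-away neighbor $v$ (the relevant portion of the graph is fully exposed), the worst-case quantity $\Delta_u(v,w) = \min_{H \in \mathcal{G}_u}(c_u(H) - c_u(H'))$ coincides with the true improvement $c_u(G) - c_u(G') > 0$ that Lenzner computed. Thus the pessimistic player is unhappy exactly when Lenzner's player is, and the best response under pessimism is the same edge swap. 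This reduces the statement for the SUM-SG to Lenzner's result.

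For the MAX-SG I would note, as the excerpt already asserts, that each of these edge swaps is also a best response with respect to $c_{\text{MAX},u}$: since the swaps in the cycle strictly reduce the sum of distances while (by the radius-three structure) also not increasing, and in fact strictly decreasing, the eccentricity of the moving player, the move is improving and optimal for the MAX cost too; and the pessimistic evaluation again coincides with the true one by the same visibility argument. Hence the same sequence $G_0, \ldots, G_i = G_0$ is a best response cycle for the MAX-SG as well.

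The main obstacle is the verification that pessimism does not alter the moves — i.e.\ that $\Delta_u(v,w)$ equals the true improvement at every step of Lenzner's cycle. This requires checking that in each intermediate graph $G_t$, for the moving player $u$, no compatible completion $H \in \mathcal{G}_u$ of the $k$-local view can make the swap look non-improving: concretely, that $v$ remains reachable from $u$ after the swap in every completion (so $d_{G'}(u,v)$ is bounded, not $\infty$), and that no completion places vertices behind $w$ in a way that inflates $c_u(H') $ relative to $c_u(H)$. Both follow from the fact that the relevant neighborhood is already fully contained in the view, but making this airtight for Lenzner's specific construction — and for the ``$< 4$'' distance claim stated in the excerpt — is where the real work lies.
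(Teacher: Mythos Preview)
Your proposal is correct and follows essentially the same approach as the paper: invoke Lenzner's best response cycle for global SUM-SG, observe that every moving player has all other vertices within distance three so that with $k\geq 3$ the pessimistic evaluation coincides with the true one, and note that the same swaps are best responses in the MAX-SG. The paper's argument is in fact terser than yours---it simply asserts the distance-less-than-four property and the MAX best-response claim without the careful verification you outline---so your identification of the ``main obstacle'' (that $\Delta_u(v,w)$ really equals the true improvement at each step) is exactly the point a full write-up would need to address, but the strategy is the same.
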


In the following, we concentrate on 
the SUM-SG and MAX-SG by pessimistic players 
with $k$-local information for $k\geq 3$ starting from a tree. 
As defined in the preliminary, 
a single player changes its strategy in each transition.

\subsection{Convergence from an initial tree}

In this section, we show that the SUM-SG and MAX-SG have the FIP. 
For players with global information, 
generalized ordinal potential functions for the SUM-SG~\cite{L11} 
and MAX-SG~\cite{KL13} have been proposed. 
We can use these generalized ordinal potential functions 
for pessimistic players with local information. 

\begin{theorem}
\label{t:FIP-SUM}
If $G_0$ is a tree, any SUM-SG $(G_0,k)$ has the FIP and 
reaches a sum-swap equilibrium within $\Order(n^3)$ edge swaps. 
\end{theorem}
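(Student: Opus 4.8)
The plan is to mirror Lenzner's convergence proof for the global-information SUM-SG on trees, the only new ingredient being one observation that bridges the pessimistic local model and the global model: \emph{the actual current graph $G_t$ is always an element of ${\mathcal G}_u$}, since $G_t$ is simple, connected, finite, and induces $u$'s view on $V_{G_t,k}(u)$. Hence, for any edge swap $(v,w)$ that an unhappy player $u$ performs,
\begin{eqnarray*}
c_{\text{SUM},u}(G_t) - c_{\text{SUM},u}(G_{t+1}) &\geq& \Delta_u(v,w) \;>\; 0 ,
\end{eqnarray*}
so every pessimistically improving swap is genuinely improving in $G_t$; and if a swap would disconnect $G_t$ then $c_{\text{SUM},u}(G_{t+1})=\infty > c_{\text{SUM},u}(G_t)$, which by the same inequality forces $\Delta_u(v,w)=-\infty$, so no unhappy player ever performs it.

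First I would use the second half of that observation to show that ``$G_t$ is a tree'' is an invariant of every improving evolution with $G_0$ a tree. In a tree, deleting $\{u,v\}$ leaves exactly two subtrees $T_u\ni u$ and $T_v\ni v$; re-inserting $\{u,w\}$ yields a connected graph on $n-1$ edges, hence a tree, precisely when $w\in T_v$, and yields a disconnected graph when $w\in T_u$. Since the latter is never performed, every $G_t$ is a tree and every performed swap has $w\in T_v$.

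Next I would verify that the social cost $SC$ serves as the generalized ordinal potential function. For a performed swap $(v,w)$ at $u$ (so $w\in T_v$), the subgraphs induced on $T_u$ and on $T_v$ are untouched and the two blocks are joined through $\{u,w\}$ instead of $\{u,v\}$; therefore only distances between $a\in T_u$ and $b\in T_v$ change, and $d_{G_{t+1}}(a,b)-d_{G_t}(a,b)=d_{T_v}(w,b)-d_{T_v}(v,b)$, which does not depend on $a$. Setting $\delta=\sum_{b\in T_v}\bigl(d_{T_v}(w,b)-d_{T_v}(v,b)\bigr)$ gives $c_{\text{SUM},u}(G_{t+1})-c_{\text{SUM},u}(G_t)=\delta$ and $SC(G_{t+1})-SC(G_t)=2\,|T_u|\,\delta$, so $c_{\text{SUM},u}$ strictly decreases if and only if $SC$ does. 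Combined with the first displayed inequality, $SC$ strictly decreases at every transition of the pessimistic SUM-SG started from a tree, hence it is a generalized ordinal potential function and the game has the FIP.

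Finally, for the step bound, note $SC(G)=2\sum_{\{a,b\}}d_G(a,b)$ is a positive integer and, over all trees on $n$ vertices, is maximized by the path, giving $SC(G)\leq 2\binom{n+1}{3}=\Order(n^3)$; since $|T_u|\geq1$ and $\delta\leq-1$, each edge swap decreases $SC$ by at least $2$, so any evolution from a tree terminates at a sum-swap equilibrium after $\Order(n^3)$ edge swaps. The argument is uniform in $k$ (for $k\leq2$ it is vacuous by Theorem~\ref{t:UHP-SUMMAX-P-1-2}). I expect the only genuine obstacle to be the opening observation and its two consequences; once those are in place, the distance bookkeeping of the third paragraph and the identification with the tree case of the global-information SUM-SG are routine.
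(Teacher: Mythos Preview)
Your proposal is correct and follows essentially the same approach as the paper: both use the observation that the actual graph $G_t$ belongs to ${\mathcal G}_u$ to conclude $c_{\text{SUM},u}(G_t)-c_{\text{SUM},u}(G_{t+1})\geq \Delta_u(v,w)>0$, then invoke $SC$ as a generalized ordinal potential on trees with the path giving the $\Order(n^3)$ upper bound. The only difference is that the paper cites Lenzner for the fact that $SC$ drops by at least~$2$ whenever $c_{\text{SUM},u}$ drops, whereas you reprove it directly via the $SC(G_{t+1})-SC(G_t)=2\,|T_u|\,\delta$ calculation; you also make the tree-invariance and non-disconnection points explicit, which the paper leaves implicit.
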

\begin{proof}
We show that $\Phi_{\text{SUM}} = SC(G)$ is 
a generalized ordinal potential function for the SUM-SG 
irrespective of the value of $k$. 
Consider a tree $G_t$ where an arbitrary unhappy player $u$ 
performs an edge swap $(v,w)$ that yields a new graph $G_{t+1}$. 
We have $\Delta_u(v,w) > 0$. 

Lenzner showed that for players with global information 
$SC(G_t)-SC(G_{t+1}) \geq 2$ holds 
if $c_u(G_t)-c_u(G_{t+1}) > 0$~\cite{L11}. 
Since $\Delta_u(v,w)$ considers the worst case graph, 
$c_u(G_t)-c_u(G_{t+1}) \geq \Delta_u(v,w) > 0$ holds. 
Consequently, $\Phi_{\text{SUM}}$ is a generalized ordinal potential function 
for the SUM-SG. 

Lenzner also showed that 
when the graph is a path of $n$ vertices, 
$\Phi_{\text{SUM}}$ achieves the maximum value of $\Theta(n^3)$, and 
if the graph is a star of $n$ vertices, 
$\Phi_{\text{SUM}}$ achieves the minimum  value of $\Theta(n^2)$. 
Hence, the number of edge swaps is $\Order(n^3)$. 
\end{proof}

We next show the FIP of the MAX-SG. 
Kawald and Lenzner presented a generalized ordinal function 
for the MAX-SG by players with global information~\cite{KL13}. 
Their generalized ordinal function is 
an $n$-tuple of players' costs, where 
the players are sorted in the descending order of their costs.  
We apply their function to the MAX-SG by pessimistic players 
with local information, 
however, we found that their proof needs small correction. 
In the following, we provide a new proof for their function. 

Consider the case where an unhappy player $u$ performs an 
edge swap $(v,w)$ in $G_t$ and a new graph $G_{t+1}$ is formed. 
Graph $G_t\setminus \{\{u,v\}\}$ consists of two trees and 
let $G_t^u$ be the tree containing vertex $u$ and 
$G_t^v$ be the tree containing vertex $v$. 
We have the following two lemmas. 

\begin{lemma}\cite{KL13} 
\label{l:COST-MAX-P-a}
Any player $x \in V(G_t^u)$ satisfies 
$c_x(G_t) > c_x(G_{t+1})$. 
\end{lemma}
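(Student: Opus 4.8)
The plan is to split the vertex set along the removed edge $\{u,v\}$, express the eccentricity of each $x\in V(G_t^u)$ before and after the swap in terms of eccentricities inside the two subtrees $G_t^u$ and $G_t^v$, and then deduce the required strict decrease from the single fact that the moving player $u$ itself strictly improves.

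First I would fix the combinatorics of the swap. Since $G_t$ is a tree, removing $\{u,v\}$ splits it into $G_t^u\ni u$ and $G_t^v\ni v$, and $w$ must lie in $V(G_t^v)$: if $w\in V(G_t^u)$ then $G_t^v$ is separated from $u$ in $G_{t+1}$, and the graph obtained from $u$'s view by turning all its boundary vertices into leaves is a graph in $\mathcal{G}_u$ on which the same swap disconnects $v$ from $u$, forcing $\Delta_u(v,w)=-\infty$ and contradicting that $u$ is unhappy via $(v,w)$. Hence $G_{t+1}$ is again a tree and $\{u,w\}$ is its only edge crossing the cut $(V(G_t^u),V(G_t^v))$. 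Writing $a=d_{G_t^u}(x,u)$ and $\mathrm{ecc}_T(y)=\max_{z\in V(T)}d_T(y,z)$, for every $x\in V(G_t^u)$ the subtrees $G_t^u$ and $G_t^v$ are unchanged and every $x$-to-$G_t^v$ path crosses the cut edge, so
\begin{align*}
c_x(G_t) &= \max\bigl(\mathrm{ecc}_{G_t^u}(x),\ a+1+\mathrm{ecc}_{G_t^v}(v)\bigr),\\
c_x(G_{t+1}) &= \max\bigl(\mathrm{ecc}_{G_t^u}(x),\ a+1+\mathrm{ecc}_{G_t^v}(w)\bigr).
\end{align*}

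Next I would extract the needed inequalities from $u$'s own improvement. Because $G_t\in\mathcal{G}_u$, we have $c_u(G_t)-c_u(G_{t+1})\ge\Delta_u(v,w)>0$, exactly as in Theorem~\ref{t:FIP-SUM}. Specializing the two displayed formulas to $x=u$ (so $a=0$) and using the elementary observation that $\max(A,B)>\max(A,C)$ forces both $B>A$ and $B>C$, this yields $\mathrm{ecc}_{G_t^v}(v)>\mathrm{ecc}_{G_t^v}(w)$ and $\mathrm{ecc}_{G_t^u}(u)<1+\mathrm{ecc}_{G_t^v}(v)$.

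Finally I would finish for an arbitrary $x\in V(G_t^u)$. The triangle inequality gives $\mathrm{ecc}_{G_t^u}(x)\le a+\mathrm{ecc}_{G_t^u}(u)<a+1+\mathrm{ecc}_{G_t^v}(v)$, so in $c_x(G_t)$ the second term dominates and $c_x(G_t)=a+1+\mathrm{ecc}_{G_t^v}(v)$; since this value also strictly exceeds $a+1+\mathrm{ecc}_{G_t^v}(w)$, it strictly exceeds both entries of the maximum defining $c_x(G_{t+1})$, hence $c_x(G_t)>c_x(G_{t+1})$. The only genuinely delicate steps are the justification that $w\in V(G_t^v)$ — the one place where the pessimism of $u$ is really used — and keeping every inequality strict when manipulating the maxima; the remainder is routine bookkeeping with distances in trees.
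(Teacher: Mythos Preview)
Your proof is correct. Note, however, that the paper does not actually prove this lemma: it is quoted verbatim from \cite{KL13} and stated without argument, so there is no in-paper proof to compare against. Your eccentricity decomposition across the cut edge and the extraction of the two strict inequalities $\mathrm{ecc}_{G_t^v}(v)>\mathrm{ecc}_{G_t^v}(w)$ and $\mathrm{ecc}_{G_t^u}(u)<1+\mathrm{ecc}_{G_t^v}(v)$ from $c_u(G_t)>c_u(G_{t+1})$ is exactly the natural route, and the final triangle-inequality step is clean. One minor simplification: since $G_t$ itself belongs to $\mathcal{G}_u$, the bound $c_u(G_t)-c_u(G_{t+1})\ge\Delta_u(v,w)>0$ already forces $G_{t+1}$ to be connected and hence $w\in V(G_t^v)$, so you do not need to build a separate witness graph in $\mathcal{G}_u$.
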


\begin{lemma}
\label{l:COST-MAX-P-b}
Any player $y \in V(G_t^v)$ satisfies at least one of the following 
two conditions; 
(i) there exists a player $x \in V(G_t^u)$ that satisfies 
$c_x(G_t) > c_y(G_{t+1})$ and 
(ii) $c_y(G_t) \geq c_y(G_{t+1})$. 
\end{lemma}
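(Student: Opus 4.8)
The plan is to exploit that, since removing $\{u,v\}$ and adding $\{u,w\}$ must keep the graph connected, $w\in V(G_t^v)$, so that $G_{t+1}$ is simply the tree obtained from $G_t$ by re-attaching the subtree $G_t^v$ at $w$ instead of at $v$. First I would record two elementary facts. (a) Both $G_t^u$ and $G_t^v$ sit inside $G_t$ \emph{and} inside $G_{t+1}$ as induced subtrees, so the distance between two vertices that both lie in $G_t^u$ (resp.\ both lie in $G_t^v$) is unchanged by the swap. (b) Writing $\mathrm{ecc}_T(z)$ for the eccentricity of $z$ within a subtree $T$, and using that a shortest path from a vertex of $G_t^v$ to a vertex of $G_t^u$ must cross the unique connecting edge, every $z\in V(G_t^v)$ satisfies $c_z(G_t)=\max\bigl(\mathrm{ecc}_{G_t^v}(z),\; d_{G_t^v}(z,v)+1+\mathrm{ecc}_{G_t^u}(u)\bigr)$ and $c_z(G_{t+1})=\max\bigl(\mathrm{ecc}_{G_t^v}(z),\; d_{G_t^v}(z,w)+1+\mathrm{ecc}_{G_t^u}(u)\bigr)$; likewise $c_u(G_t)=\max\bigl(\mathrm{ecc}_{G_t^u}(u),\,1+\mathrm{ecc}_{G_t^v}(v)\bigr)$ and $c_u(G_{t+1})=\max\bigl(\mathrm{ecc}_{G_t^u}(u),\,1+\mathrm{ecc}_{G_t^v}(w)\bigr)$.

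Second, I would apply Lemma~\ref{l:COST-MAX-P-a} to $u$ itself, giving $c_u(G_t)>c_u(G_{t+1})$. Comparing the last two displayed expressions, the common term $\mathrm{ecc}_{G_t^u}(u)$ drops out and one extracts the key inequality $\mathrm{ecc}_{G_t^v}(w)<\mathrm{ecc}_{G_t^v}(v)$; since eccentricities are integers, $\mathrm{ecc}_{G_t^v}(w)\le\mathrm{ecc}_{G_t^v}(v)-1$. Informally, $u$'s swap is profitable only if $w$ is strictly more central inside $G_t^v$ than the old anchor $v$, and this is exactly the leverage needed in the next step.

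Third, I would fix $y\in V(G_t^v)$ and split on which of the two terms realizes $c_y(G_{t+1})$. If it is $\mathrm{ecc}_{G_t^v}(y)$: since $G_t^v$ is an induced subtree of $G_t$, $c_y(G_{t+1})=\mathrm{ecc}_{G_t^v}(y)\le c_y(G_t)$, which is condition~(ii). If it is the other term: $c_y(G_{t+1})=d_{G_t^v}(y,w)+1+\mathrm{ecc}_{G_t^u}(u)\le\mathrm{ecc}_{G_t^v}(w)+1+\mathrm{ecc}_{G_t^u}(u)\le\mathrm{ecc}_{G_t^v}(v)+\mathrm{ecc}_{G_t^u}(u)$. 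To obtain condition~(i), take $x\in V(G_t^u)$ farthest from $u$ within $G_t^u$ and $p\in V(G_t^v)$ farthest from $v$ within $G_t^v$; the $x$--$p$ path in $G_t$ runs through $u$ and $v$, so $c_x(G_t)\ge d_{G_t}(x,p)=\mathrm{ecc}_{G_t^u}(u)+1+\mathrm{ecc}_{G_t^v}(v)>c_y(G_{t+1})$. The degenerate situations $G_t^u=\{u\}$ and $y\in\{v,w\}$ fall out of the same chain of inequalities, so no separate treatment is needed.

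The main obstacle is the interplay of the second step with the second branch of the third: a priori the ``cross distance'' $d_{G_t^v}(y,w)$ can exceed $\mathrm{ecc}_{G_t^v}(v)$, so condition~(i) would genuinely fail if one only argued with distances inside $G_t^v$; the real content is recognizing that Lemma~\ref{l:COST-MAX-P-a} applied to $u$ delivers precisely the centrality comparison $\mathrm{ecc}_{G_t^v}(w)<\mathrm{ecc}_{G_t^v}(v)$ that tames this term. Everything else is bookkeeping with tree distances.
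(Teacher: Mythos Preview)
Your argument is correct and is essentially the paper's proof rephrased in eccentricity language: the paper picks explicit vertices $x=\arg\max_{x'\in V(G_t^u)} d(u,x')$, $z=\arg\max_{z'\in V(G_t^v)} d(v,z')$, $z'=\arg\max_{z''\in V(G_t^v)} d(y,z'')$, performs the same case split on whether $y$'s farthest vertex in $G_{t+1}$ lies in $G_t^u$ or in $G_t^v$, and in the cross case uses directly that ``otherwise $u$ was not unhappy in $G_t$'' to obtain $d_{G_t}(w,y)<d_{G_t}(v,z)$---which is exactly your inequality $\mathrm{ecc}_{G_t^v}(w)<\mathrm{ecc}_{G_t^v}(v)$ obtained via Lemma~\ref{l:COST-MAX-P-a} applied to $u$. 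The only cosmetic difference is that you route the key inequality through the previous lemma rather than invoking unhappiness of $u$ directly.
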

\begin{proof}
For an arbitrary player $y \in V(G_t^v)$, let 
\begin{eqnarray*}
x &=& \argmax_{x' \in V(G_t^u)} \{d_{G_t^u}(u,x')\}, \\ 
z &=& \argmax_{z' \in V(G_t^v)} \{d_{G_t^v}(v,z')\}, \ \text{and}\\
z' &=& \argmax_{z'' \in V(G_t^v)} \{d_{G_t^v}(y,z'')\}. 
\end{eqnarray*}
We have the following three equations; 
\begin{eqnarray*}
c_x(G_t) &=& d_{G_t}(x,u) + 1 + d_{G_t}(y,z) = d_{G_t}(x,z) \\ 
c_y(G_t)&=& \max \{d_{G_t}(x,y), d_{G_t}(y, z')\} \\ 
c_y(G_{t+1}) &=& \max \{d_{G_{t+1}}(x,y), 
d_{G_{t+1}}(y, z')\}. 
\end{eqnarray*} 
If $d_{G_{t+1}}(x,y)>d_{G_{t+1}}(y,z')$, we have 
\begin{eqnarray*}
c_x(G_t)-c_y(G_{t+1}) 
&=& d_{G_t}(x,z) - d_{G_{t+1}}(x,y) \\ 
&=& d_{G_t}(x,u) + 1 + d_{G_t}(v,z) \\ 
&& - (d_{G_{t+1}}(x,u) + 1 + d_{G_{t+1}}(w,y)) \\ 
&=& d_{G_t}(x,u) + 1 + d_{G_t}(v,z) \\ 
&& - (d_{G_t}(x,u) + 1 + d_{G_t}(w,y)) \\ 
&=& d_{G_t}(v,z) - d_{G_t}(w,y) > 0. 
\end{eqnarray*}
The third equation holds because during the 
transition from $G_t$ to $G_{t+1}$ 
any distance between vertices in $V(G_t^u)$ 
(and $V(G_t^v)$, respectively) does not change. 
The last line is bounded by $0$ because 
\begin{eqnarray*}
d_{G_{t+1}}(u,y) 
&=& 1 + d_{G_{t+1}}(w,y) \\ 
&=& 1 + d_{G_t}(w,y) \\ 
&<& 1 + d_{G_t}(v,z),   
\end{eqnarray*}
otherwise $u$ was not unhappy in $G_t$. 
Thus, the first condition is satisfied. 

If $d_{G_{t+1}}(x,y) \leq d_{G_{t+1}}(y,z')$, we have 
\begin{eqnarray*}
c_y(G_{t+1}) 
&=& d_{G_{t+1}}(y,z') \\ 
&=& d_{G_{t}}(y,z') \\  
&\le& \max{\{ d_{G_{t}}(x,y), d_{G_{t}}(y,z') \}} \\ 
&=& c_y(G_t) 
\end{eqnarray*}
and the second condition is satisfied. 
\end{proof}

We define $\Phi_{\text{MAX}}(G)$ for a graph $G$ as an $n$-tuple 
$(c_{u_1}(G), c_{u_2}(G), \ldots, c_{u_n}(G))$ where 
$c_{u_i}(G) \geq c_{u_{i+1}}(G)$ 
for $i=1,2, \cdots, n-1$. 
We assume that ties are broken arbitrarily. 
We then consider lexicographic ordering 
of $n$-tuples. 
For two $n$-tuples $C=(c_1, c_2, \ldots, c_n)$ 
and $C'=(c'_1, c'_2, \ldots, c'_n)$ 
where $c_i, c'_i \in {\mathbb Z}$ for $i=1,2,\cdots,n$, 
when $\Delta=(c_1-c'_1, c_2-c'_2, \ldots, c_n-c'_n) \neq \bm{0}$ 
and the leftmost non-zero entry of $\Delta$ is positive, 
we say $C$ is lexicographically larger than $C'$, 
denoted by $C >_{lex} C'$. 

\begin{theorem}
\label{t:FIP-MAX} 
If $G_0$ is a tree, a MAX-SG $(G_0,k)$ has the FIP and 
reaches a max-swap equilibrium within $\Order(n^3)$ edge swaps. 
\end{theorem}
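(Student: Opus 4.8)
The plan is to prove that the $n$-tuple $\Phi_{\text{MAX}}$ defined above, compared under the lexicographic order $>_{lex}$, is a generalized ordinal potential function for the MAX-SG, and that this already forces convergence; the running-time bound will then come from a stratification of the evolution by the value of the first coordinate of $\Phi_{\text{MAX}}$. I first note that along the evolution we stay on trees: a swap that reconnects $u$ to a vertex of $G_t^u$ disconnects $G_t^v$ and hence cannot be improving, so in every transition $w\in V(G_t^v)$ and $G_{t+1}$ is again a tree, to which Lemmas~\ref{l:COST-MAX-P-a} and~\ref{l:COST-MAX-P-b} apply.

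For the potential step, fix a transition $G_t\to G_{t+1}$ caused by an unhappy player $u$ performing an edge swap $(v,w)$, and set $c^{*}=\max_{x\in V(G_t^u)}c_x(G_t)$. By Lemma~\ref{l:COST-MAX-P-a}, every $x\in V(G_t^u)$ satisfies $c_x(G_{t+1})<c_x(G_t)\le c^{*}$, so in $G_{t+1}$ all of $V(G_t^u)$ lies strictly below the level $c^{*}$; in particular a player that attained $c^{*}$ in $G_t$ drops below it. By Lemma~\ref{l:COST-MAX-P-b}, each $y\in V(G_t^v)$ either satisfies $c_y(G_{t+1})<c_x(G_t)\le c^{*}$ for some $x\in V(G_t^u)$ (case (i)), or $c_y(G_{t+1})\le c_y(G_t)$ (case (ii)). Hence, for every threshold $c\ge c^{*}$ the number of players with cost at least $c$ does not increase from $G_t$ to $G_{t+1}$, and for $c=c^{*}$ it strictly decreases. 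Passing to the sorted-descending tuples via the standard duality between a sorted vector and its level-count function $f_C(c)=|\{i:c_i\ge c\}|$, one reads off that the leftmost coordinate where $\Phi_{\text{MAX}}(G_t)$ and $\Phi_{\text{MAX}}(G_{t+1})$ differ carries a strictly larger value in $G_t$, i.e.\ $\Phi_{\text{MAX}}(G_t)>_{lex}\Phi_{\text{MAX}}(G_{t+1})$. Since each entry of $\Phi_{\text{MAX}}$ is an integer in $\{1,\dots,n-1\}$, only finitely many values occur, so the MAX-SG has the FIP.

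For the $\Order(n^3)$ bound, observe that the first coordinate of $\Phi_{\text{MAX}}(G)$ is exactly $\mathrm{diam}(G)$, which by the above is non-increasing along the evolution and lies in $\{2,\dots,n-1\}$, so it strictly decreases $\Order(n)$ times. It therefore suffices to bound by $\Order(n^2)$ the length of any run of consecutive swaps that keeps the diameter fixed at a value $D$. For this I would use the secondary potential $SC_{\text{MAX}}(G)=\sum_{v}c_{\text{MAX},v}(G)=\sum_v \mathrm{ecc}_G(v)$, a non-negative integer bounded by $n(n-1)$, and show that a diameter-preserving MAX-improving swap strictly decreases it: all of $V(G_t^u)$ strictly drops by Lemma~\ref{l:COST-MAX-P-a}, and the only members of $V(G_t^v)$ whose eccentricity can rise are those for which the newly attached vertex $w$ is (almost) an eccentricity-witness; a short tree-geometry argument, using that the improvement condition forces $\mathrm{ecc}_{G_t^v}(w)<\mathrm{ecc}_{G_t^v}(v)$ (so $G_t^v$ has a vertex strictly more peripheral than $w$), shows this is impossible for players on the $v$-side of $w$. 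Multiplying the two bounds gives $\Order(n)\cdot\Order(n^2)=\Order(n^3)$ edge swaps.

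The step I expect to be the main obstacle is this secondary decrease: the lexicographic potential by itself only certifies finiteness, since its value set has exponential size, so the polynomial bound genuinely requires a second monotone integer quantity of polynomial range together with the diameter stratification, and ruling out (or absorbing) eccentricity increases in $V(G_t^v)$ needs a careful case analysis on the location of $w$ and of the diametral vertices of $G_t^v$ — precisely the point at which the original argument of~\cite{KL13} requires the correction alluded to above.
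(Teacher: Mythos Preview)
Your argument for the FIP via the lexicographic potential $\Phi_{\text{MAX}}$ is correct and is essentially the paper's proof: both set $c^{*}=\max_{x\in V(G_t^u)}c_x(G_t)$, use Lemmas~\ref{l:COST-MAX-P-a} and~\ref{l:COST-MAX-P-b} to show that the level count $|\{v:c_v\ge c\}|$ is non-increasing for every $c\ge c^{*}$ and strictly decreasing at $c=c^{*}$, and conclude $\Phi_{\text{MAX}}(G_t)>_{lex}\Phi_{\text{MAX}}(G_{t+1})$. Your phrasing via the level-count/sorted-tuple duality is slightly more explicit than the paper's, but the content is identical.

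For the $\Order(n^3)$ bound the paper gives no argument at all: it simply writes ``We can bound the number of edge swaps in the same manner as~\cite{KL13}.'' Your diameter-stratification-plus-$SC_{\text{MAX}}$ plan is therefore more ambitious than what the paper actually does, but it has a real gap exactly where you flag it. Lemma~\ref{l:COST-MAX-P-b} case~(i) permits $c_y(G_{t+1})>c_y(G_t)$ for $y\in V(G_t^v)$, so individual eccentricities on the $v$-side \emph{can} rise; moreover your one-line sketch (``impossible for players on the $v$-side of $w$'') points the wrong way, since the vertices whose eccentricity can rise are precisely those with $d_{G_t^v}(y,w)>d_{G_t^v}(y,v)$, i.e.\ those closer to $v$ than to $w$, whose farthest vertex in $G_{t+1}$ becomes $x$ reached through the new edge $\{u,w\}$. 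To salvage $SC_{\text{MAX}}$ as a secondary potential you would have to prove that the aggregate rise over such $y$ is strictly dominated by the aggregate drop over $V(G_t^u)$ guaranteed by Lemma~\ref{l:COST-MAX-P-a}, and this is a nontrivial tree-geometry estimate not implied by the two lemmas. Absent that estimate your $\Order(n^3)$ bound is not established; you should either complete it or, as the paper does, appeal directly to~\cite{KL13}.
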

\begin{proof} 
We demonstrate that any transition from $G_t$ to $G_{t+1}$ 
satisfies 
$\Phi_{\text{MAX}}(G_t) >_{lex} \Phi_{\text{MAX}}(G_t+1)$. 
Let $u$ be the moving player in $G_t$ 
and $x$ be the player with the maximum cost in $G_t^u$. 
Then, in $G_t^v$, 
there may exist a player with larger cost than $c_x(G_t)$. 
We sort these players in the descending order of their costs 
and let $y_1, y_2, \ldots, y_{p}$ be the obtained sequence of 
players and $y_{p+1}, \ldots, y_q$ be the remaining players in $G_t^v$. 

We first show that 
any $y_j$ ($1 \leq j \leq p$) satisfies 
$c_{y_j}(G_t) \geq c_{y_j}(G_{t+1})$. 
If the second condition of Lemma~\ref{l:COST-MAX-P-b}
holds for all $y_1, y_2, \ldots, y_p$, 
we have the property. 
Otherwise, 
there exists $y_{j}$ that does not satisfy the second condition 
but the first condition. 
However, we have $c_{y_j}(G_t) \geq c_{x}(G_t)$ 
and by the proof of Lemma~\ref{l:COST-MAX-P-b}, 
$c_x(G_t) > c_{y_j}(G_{t+1})$ holds. 
This is a contradiction and all $y_j$ satisfies 
$c_{y_j}(G_t) \geq c_{y_j}(G_{t+1})$. 

Then we consider a player 
$v \in V(G_t^u) \cup \{y_{p+1}, y_{p+2}, \ldots, y_{q}\}$. 
By Lemma~\ref{l:COST-MAX-P-a} and Lemma~\ref{l:COST-MAX-P-b}
such player $v$ satisfies $c_v(G_{t+1}) < c_x(G_t)$. 
Consequently, we have 
$|\{v \mid c_v(G_t) \geq c_x(G_t)\}| > 
|\{v \mid c_v(G_{t+1}) \geq c_x(G_t)\}|$, and 
$\Phi_{\text{MAX}}(G_i) >_{lex} \Phi_{\text{MAX}}(G_{t+1})$. 

We can bound the number of edge swaps in the same manner as 
\cite{KL13}. 
\end{proof}

By Theorem~\ref{t:FIP-SUM} and \ref{t:FIP-MAX}, 
when an initial graph is a tree, 
the SUM-SG and MAX-SG by pessimistic players with local information 
converge to a sum-swap equilibrium and max-swap equilibrium, 
respectively 
within $\Order(n^3)$ edge swaps.

\section{PoA for pessimistic players} 

In this section, we analyze PoA of the SUM-SG and MAX-SG 
by pessimistic players with local information. 
Alon et al. showed that for players with global information, 
the diameter of a tree swap equilibrium 
is constant for the two cost functions, thus 
PoA is also constant~\cite{ADHL13}. 
On the other hand, 
our results show the clear contrast by the value of $k$.  
When $k=3$, there exists a sum-swap equilibrium 
of diameter $\Theta(n)$ and 
a max-swap equilibrium of diameter $\Theta(n)$. 
Thus, PoA is $\Theta(n)$ for both games. 
When $k\geq4$, the diameter of any sum-swap equilibrium 
is at most two and 
that of any max-swap equilibrium is at most three. 
Thus, the PoA is bounded by a constant for both games. 

In the following, we consider a path in a graph. 
A path $P$ of length $\ell$ is denoted by 
a sequence of vertices on it, i.e., 
$P=v_0 v_1 \ldots v_{\ell}$. 
The set of vertices that appear on $P$ is denoted by $V(P)$ and 
the set of edges of $P$ is denoted by $E(P)$. 
Given a tree $G=(V,E)$ and a path 
$P=v_0 v_1 v_2 \ldots v_{\ell}$ in $G$, 
consider the forest $G'=(V, E \setminus E(P))$ and 
let $T_{G,P}(v_i)$ denote the connected component (thus, a tree) 
containing $v_i$. 
We consider $v_i$ as the root of $T_{G,P}(v_i)$ 
when we address the depth of $T_{G,P}(v_i)$. 
The following lemma provides a basic technique to check the existence of 
an unhappy player. 
\begin{lemma}
\label{l:PATH-SUM-P-3}
In the SUM-SG, 
when $k=3$, a player $u$ in a tree $G$ is unhappy 
if and only if 
there exists a path $P=uvw$ that satisfies the following two 
conditions; 
(i) the depth of $T_{G,P}(v)$ is at most one, and 
(ii) $|V(T_{G,P}(v))| < |N_{T_{G,P}(w)}(w)|$. 
\end{lemma}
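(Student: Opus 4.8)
The plan is to characterize precisely when a pessimistic player $u$ in a tree $G$ with $k=3$ has a profitable edge swap, and then show this happens exactly when the stated path $P = uvw$ exists. Since $k=3$, the only candidate new endpoints $w$ lie at distance $2$ or $3$ from $u$; but swapping to a vertex at distance $3$ gains nothing in the worst case (the worst-case global graph may attach everything else far away), so I would first argue that any profitable swap must be of the form $(v,w)$ where $v \in N_G(u)$ and $w$ is at distance $2$ from $u$ through $v$ — i.e., $P = uvw$ is a path in $G$. This reduces the problem to analyzing swaps along such length-two paths.

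Next I would compute $\Delta_u(v,w)$ for such a swap in the worst-case global graph. Removing $\{u,v\}$ detaches the tree $T_{G,P}(v)$ (rooted at $v$) from $u$; in the worst case $u$ must reach every vertex of $T_{G,P}(v)$ through the new edge $\{u,w\}$, so the new distance from $u$ to a vertex at depth $d$ in $T_{G,P}(v)$ is $1 + 1 + d = d+2$ (via $w$, then $v$, then down), whereas the old distance was $1 + d$. Meanwhile, the vertices reached via $w$ that were formerly at distance $3$ (namely the neighbors of $w$ other than the one toward $u$, i.e., $N_{T_{G,P}(w)}(w)$ in $u$'s view) drop from distance $3$ to distance $2$. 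All other vertices in $u$'s view are unaffected, and in the worst case everything outside the view is attached so as to neither help nor hurt relative to these two effects. Summing: $u$ loses $2$ for each vertex in $T_{G,P}(v)$ (since each such vertex's distance increases by — wait, more carefully — a depth-$0$ vertex $v$ goes from $1$ to $2$, a loss of $1$; a depth-$1$ vertex goes from $2$ to $3$, a loss of $1$; but the worst case for $u$ is that $T_{G,P}(v)$ re-attaches to $u$ only through $w\!-\!v$, so each vertex at depth $d$ contributes $+1$ to the cost increase, i.e., total loss $|V(T_{G,P}(v))|$, provided the depth is at most one so that no vertex is pushed even farther), and $u$ gains $1$ for each vertex in $N_{T_{G,P}(w)}(w)$ whose distance drops from $3$ to $2$. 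Hence $\Delta_u(v,w) = |N_{T_{G,P}(w)}(w)| - |V(T_{G,P}(v))|$ exactly when the depth condition (i) holds, and $\Delta_u(v,w) \le |N_{T_{G,P}(w)}(w)| - |V(T_{G,P}(v))|$ (strictly worse) otherwise.

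The remaining step is to verify condition (i) is necessary: if $T_{G,P}(v)$ has depth $\ge 2$, then some vertex in it is at distance $\ge 3$ from $u$ in $G$, and in the worst-case global graph after the swap it sits at distance $\ge 4$ via $w\!-\!v$ (and the adversary can ensure no shortcut), so the loss on $T_{G,P}(v)$ strictly exceeds $|V(T_{G,P}(v))|$ while the gain is still at most $|N_{T_{G,P}(w)}(w)|$, and one checks $|N_{T_{G,P}(w)}(w)| \le |V(T_{G,P}(v))|$ cannot then be violated in a way that yields positive $\Delta_u$ — more cleanly, I would show directly that with depth $\ge 2$ no swap out of $u$ along $P$ is profitable, and that swaps to distance-$3$ vertices are never profitable, so every unhappy $u$ admits a path $P = uvw$ with depth at most one, and then the count above gives (ii). Putting both directions together yields the iff.

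The main obstacle is getting the worst-case global graph reasoning airtight: one must argue that the adversary, constrained only to be consistent with $u$'s $3$-local view, can simultaneously realize the "bad" re-attachment of $T_{G,P}(v)$ through $w$ and cannot create any compensating shortcut, and that vertices outside the view contribute non-negatively to $c_u(H) - c_u(H')$ — i.e., the adversary's worst case for the swap is attained by attaching all hidden mass far from both $v$ and $w$. This requires care because $\mathcal{G}_u$ ranges over all consistent graphs and one is taking a min of a difference, not a difference of a min; I would handle it by exhibiting an explicit worst-case $H$ achieving the claimed $\Delta_u$ and separately bounding $c_u(H)-c_u(H')$ from below for every $H \in \mathcal{G}_u$.
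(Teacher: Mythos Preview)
Your approach is essentially the same as the paper's: both directions proceed by computing the worst-case $\Delta_u(v,w)$ for swaps along length-two paths and by ruling out swaps to distance-$3$ vertices separately. The case analysis you outline (distance-$3$ swap; distance-$2$ swap with depth $\leq 1$; distance-$2$ swap with depth $\geq 2$) is exactly the paper's decomposition.

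There is one spot where your reasoning is off and should be replaced by the paper's argument. For the depth $\geq 2$ case you write that ``the loss on $T_{G,P}(v)$ strictly exceeds $|V(T_{G,P}(v))|$.'' That is not the point: each visible vertex in $T_{G,P}(v)$ still increases in distance by exactly one, so the loss on the \emph{visible} part is precisely $|V(T_{G,P}(v)) \cap V_{G,3}(u)|$. The correct reason the swap is unprofitable is that when $T_{G,P}(v)$ has depth at least two, some vertex $v'_2$ at depth two lies on the boundary of $u$'s view, and the adversary is free to hang arbitrarily many children off $v'_2$; every such hidden child also moves one step farther from $u$ after the swap, driving the worst-case $\Delta_u(v,w)$ to $-\infty$. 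This is exactly the mechanism the paper invokes, and it is what makes condition~(i) necessary rather than any per-vertex excess loss.

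Your final paragraph correctly identifies that the delicate part is handling the $\min_{H \in \mathcal{G}_u}$ of a difference; the paper does this informally by naming the adversary's worst-case attachment in each of the three cases, which is essentially the ``exhibit an explicit $H$'' strategy you propose.
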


\begin{proof}
We first show that $u$ is unhappy if the two conditions hold. 
Assume that there is a path $P=uvw$ 
satisfying the two conditions. 
See Figure~\ref{fig:3trees}. 
Let $G'$ be the graph obtained by the edge swap $(v,w)$ at $u$ in $G$. 
For every $x \in V(T_{G,P}(v))$, 
$d_{G'}(u,x)=d_{G}(u,x)+1$ and 
for every $y \in N_{T_{G,P}(w)}(w)$ 
$d_{G'}(u,y)=d_{G}(u,y)-1$. 
By condition (i), $u$ knows that the edge swap $(v,w)$ 
increases the distance to $x \in V(T_{G,P}(v))$. 
By condition (ii), $u$ knows that the edge swap $(v,w)$ decreases 
its cost by at least $|N_{T_{G,P}(w)}(w)|$. 
In the worst-case global graph, 
$w$ has no adjacent players other than $N_{T_{G,P}(w)}(w)$. 
Hence, 
\begin{eqnarray*}
\Delta_{u}(v,w)
&=& |N_{T_{G,P}(w)}(w)| - |V(T_{G,P}(v))| > 0, 
\end{eqnarray*}
and $u$ is unhappy because of this edge swap $(v,w)$.  

\begin{figure}[t]
\centering 
\includegraphics[width=5cm]{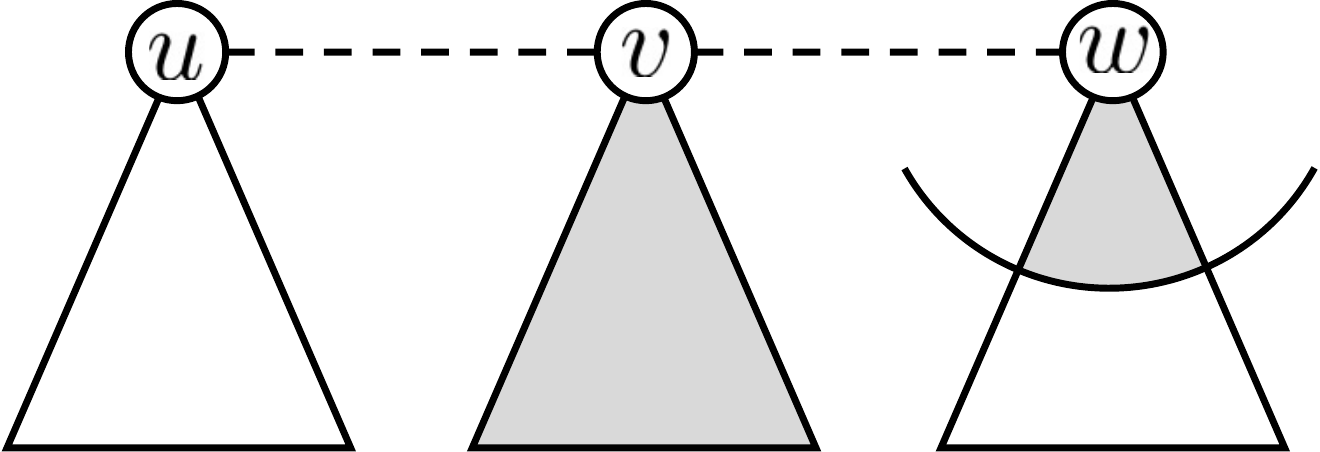}
\caption{$T_{G,P}(u)$, $T_{G,P}(v)$, and $T_{G,P}(w)$.}
\label{fig:3trees}
\end{figure}

Next, we show that $u$ is unhappy in $G$ 
only if the two conditions hold. 
Consider the case where for any path $P=uvw$,  
(i') the depth of $T_{G,P}(v)$ is larger than one, or 
(ii') $|V(T_{G,P}(v))| \geq |N_{T_{G,P}(w)}(w)|$ holds. 
We show that any player $u \in V$ is not unhappy. 
We check an arbitrary edge swap $(v',x')$ at $u$. 
Thus, $v' \in N_G(u)$ and $x' \in V_{G,3}(u) \setminus N_G(u)$. 
$G$ must have a path between $v'$ and $x'$, 
otherwise the edge swap $(v',x')$ disconnects the players. 
If $u$ cannot see this path, in the worst case global graph, 
$v'$ is not reachable from $u$. 
Hence, $G$ contains a path $u v' w' x'$ or $u v' x'$. 

If $G$ contains a path $u v' w' x'$, 
the edge swap $(v', x')$ satisfies 
$d_G(u,x') - d_{G'}(u,x')=2$, $d_G(u,v') - d_{G'}(u,v')=-2$, and 
$d_G(u,w') - d_{G'}(u,w')=0$. 
The worst-case global graph for the edge swap $(v', x')$ is 
a graph where $x'$ is not adjacent to any other vertex in 
$V(G) \setminus V_{G,3}(u)$. 
Thus, $\Delta_{c_u}(v',x') \leq 0$. 
Hence, $u$ is not unhappy with respect to the edge swap $(v',x')$. 

If $G$ contains a path $P'=u v' x'$ and condition (i') holds, 
there exist vertices $v'_1, v'_2 \in T_{G,P'}(v')$ that form 
a path $u v' v'_1 v'_2$. 
In the worst case global graph for the edge swap $(v', x')$, 
$v'_2$ has many children whose distance from $u$ 
increases by one in $G'$. 
Hence, $\Delta_{c_u}(v',x') \leq 0$ and $u$ is not unhappy 
with respect to the edge swap $(v', x')$. 

If $G$ contains a path $P' = u v' x'$ and condition (ii') holds, 
in the worst-case global graph for the edge swap $(v', x')$, 
the number of players whose distance from $u$ decreases by one 
with the edge swap $(v', x')$ is $|N_{T_{G,P}(w)}(w)|$ 
and the number of players whose distance from $u$ increases 
by one is lower bounded by $|V(T_{G,P}(v))|$. 
Thus, 
$\Delta_{c_u}(v',x') \geq |N_{T_{G,P}(w)}(w)| - |V(T_{G,P}(v))| 
\leq 0$ holds and 
player $u$ is not unhappy with respect to the edge swap $(v', x')$. 

Consequently, $u$ is not unhappy with respect to any edge swap. 
\end{proof}

\begin{theorem} 
\label{t:POA-SUM-P-3}
When $n \geq 13$ and $k \leq 3$, 
$\PoA_{\text{SUM}}(n,n-1,k) = \Theta(n)$. 
\end{theorem}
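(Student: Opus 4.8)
The plan is to prove the two matching bounds $\PoA_{\text{SUM}}(n,n-1,k)=\Order(n)$ and $\PoA_{\text{SUM}}(n,n-1,k)=\rOmega(n)$, both holding for every $k\le 3$. For the upper bound I would observe that any $G\in\mathcal G(n,n-1)$ has $SC(G)=\sum_u\sum_{v\ne u}d_G(u,v)\ge n(n-1)$, since each of the $n(n-1)$ ordered pairs is at distance at least one, so $\min_{G'\in\mathcal G(n,n-1)}SC(G')=\rOmega(n^2)$; conversely every $G\in\overline{\mathcal G}_{\text{SUM}}(n,n-1,k)$ is a tree (connected with $n-1$ edges), hence has diameter at most $n-1$ and $SC(G)\le n(n-1)(n-1)=\Order(n^3)$. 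Dividing the two estimates gives $\PoA_{\text{SUM}}(n,n-1,k)=\Order(n)$, and this direction needs only these two routine bounds.

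For the lower bound it suffices to exhibit one sum-swap equilibrium of social cost $\rOmega(n^3)$ for each $k\le 3$, since then $\PoA_{\text{SUM}}(n,n-1,k)\ge\rOmega(n^3)/\Order(n^2)=\rOmega(n)$. The natural candidate is the path $P_n=v_0v_1\cdots v_{n-1}$, which has diameter $n-1=\Theta(n)$ and $SC(P_n)=\sum_{0\le i<j\le n-1}2(j-i)=\rOmega(n^3)$. When $k=1,2$ the equilibrium property is immediate from Theorem~\ref{t:UHP-SUMMAX-P-1-2}, since no player is ever unhappy. When $k=3$ I would invoke Lemma~\ref{l:PATH-SUM-P-3}: a vertex $u$ of $P_n$ is unhappy iff some path $P=uvw$ satisfies both (i)~$\mathrm{depth}(T_{P_n,P}(v))\le 1$ and (ii)~$|V(T_{P_n,P}(v))|<|N_{T_{P_n,P}(w)}(w)|$. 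But every vertex of a path has degree at most two, so after deleting the two edges of $P$ the vertex $v$ becomes isolated, giving $|V(T_{P_n,P}(v))|=1$, while $w$ retains at most one incident edge, giving $|N_{T_{P_n,P}(w)}(w)|\le 1$; hence condition~(ii) fails for every $P=uvw$ and $P_n$ has no unhappy player. Combining the two directions yields $\PoA_{\text{SUM}}(n,n-1,k)=\Theta(n)$.

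The step I expect to be the real obstacle is the $k=3$ lower bound: one must check that \emph{no} path $P=uvw$ issuing from a would-be moving vertex of the chosen diameter-$\Theta(n)$ tree falls under the characterisation of Lemma~\ref{l:PATH-SUM-P-3}, and this has to be carried out with particular care at the vertices whose $3$-neighbourhood is not a clean interior path segment — namely the two endpoints and their neighbours. Should a plain $P_n$ turn out to have an unhappy vertex near an end, the remedy is to replace each end of the path by a bounded-size pendant gadget and re-run the finite, $\Order(1)$-vertex case analysis there; this is where the hypothesis $n\ge 13$ is used, guaranteeing that the long central segment still dominates the social cost and that the local views around the gadgets behave as intended. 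The remaining ingredients — the two social-cost computations and the degree bookkeeping inside Lemma~\ref{l:PATH-SUM-P-3} — are all routine.
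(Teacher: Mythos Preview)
The plain path $P_n$ is \emph{not} a sum-swap equilibrium for $k=3$, and the failure is not confined to the ends.  Take any vertex $u=v_i$ and the swap $(v_{i+1},v_{i+2})$.  In every $H\in\mathcal G_u$ compatible with $u$'s $3$-local view, the distances from $u$ to $v_{i+1},v_{i+2},v_{i+3}$ change by $+1,-1,-1$ respectively, while the distances to $v_{i-1},v_{i-2},v_{i-3}$ are untouched; any vertex of $H$ outside the view must hang off $v_{i\pm3}$ and can only get closer (or stay put) after the swap.  Hence $\Delta_u(v_{i+1},v_{i+2})\ge 1$ for every such $H$, and $u$ is unhappy.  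The same computation works at the endpoints.  So your proposed remedy of attaching pendant gadgets only at the two ends cannot work: every vertex of the bare path is unhappy, and the witness tree must be fattened along its entire length.

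Your formal check of Lemma~\ref{l:PATH-SUM-P-3} is arithmetically correct, but the lemma's condition~(ii) as stated is off by one: the worst-case improvement of the swap $(v,w)$ is actually $|N_{T_{G,P}(w)}(w)|+1-|V(T_{G,P}(v))|$, the extra $+1$ coming from $w$ itself moving one step closer.  Thus the true unhappiness threshold is $|V(T_{G,P}(v))|\le|N_{T_{G,P}(w)}(w)|$, and on the path this reads $1\le 1$, which holds.

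The paper's proof does precisely what is needed: it builds a caterpillar $TS(p)$ in which each spine vertex carries a small star (three extra leaves via an intermediate vertex $a_i$), so that for every candidate length-two path $P=uvw$ either the middle tree $T_{G,P}(v)$ has depth at least two, or $|V(T_{G,P}(v))|=3$ while $|N_{T_{G,P}(w)}(w)|\le 2$; in both cases no vertex is unhappy.  This tree still has diameter $\Theta(n)$ and social cost $\Theta(n^3)$, yielding the $\Omega(n)$ lower bound.  Your upper-bound argument and your handling of $k\le 2$ via Theorem~\ref{t:UHP-SUMMAX-P-1-2} are fine; only the $k=3$ witness needs to be replaced by such a fattened path.
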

\begin{proof}
We present a sum-swap equilibrium of diameter $\Theta(n)$. 
We define a tree $TS(p)$ with a spine path of length $p$ 
as follows: 
For $i=1,2,\cdots, p$, 
$H_i$ is a tree, where $a_i$ has four children 
$b_i$, $c_i$, $d_i$, and $e_i$. 
For $i=0,p+1$, $H_i$ is a tree rooted at $a_i$ with 
three children $b_i$, c$_i$, and $d_i$. 
$TS(p)$ is a tree defined by 
\begin{eqnarray*}
V(TS(p)) &=& \bigcup_{i=0}^{p+1} V(H_i) \\ 
E(TS(p)) &=& \bigcup_{i=0}^{p+1} E(H_i) \cup 
\{\{a_0, e_1\}, \{e_{p}, a_{p+1}\}\} \cup 
\bigcup_{i=1}^{p-1} \{\{e_i,e_{i+1}\}\}. 
\end{eqnarray*}
Figure~\ref{fig:equ-sumsg-3} shows $TS(7)$ as an example. 
\begin{figure}[t]
\centering 
\includegraphics[width=6cm]{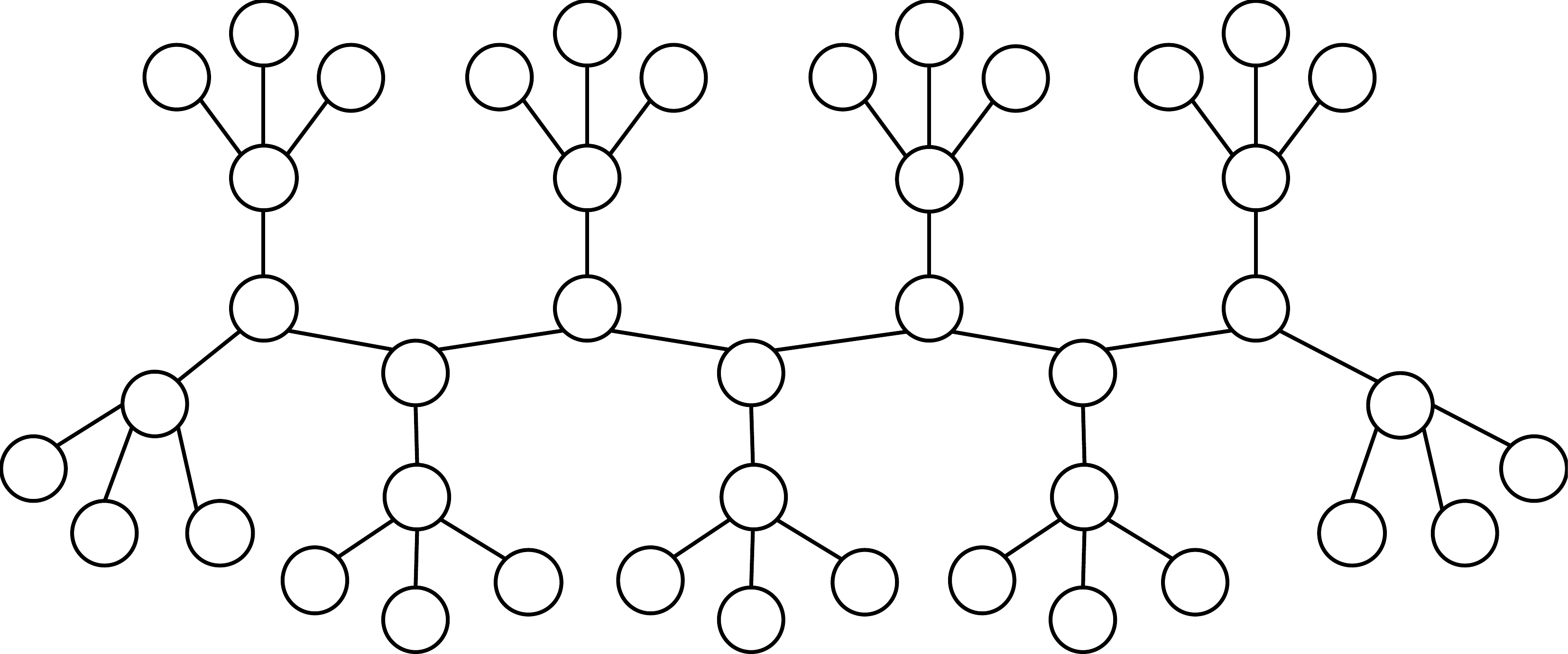}
\caption{$TS(7)$}
\label{fig:equ-sumsg-3}
\end{figure}

We show that for each $u \in V(TS(p))$ ($p \geq 3$), 
any path $P=uvw$ does not satisfy the two conditions of 
Lemma~\ref{l:PATH-SUM-P-3}. 
First, consider the case where $u$ is a leaf of $TS(p)$. 
Then, $v$ is $a_i$ for some $i=0,1,\cdots, p+1$. 
If $w$ is a leaf, the depth of $T_{G,P}(v)$ is larger than one, and 
the first condition of Lemma~\ref{l:PATH-SUM-P-3} is not satisfied. 
If $w$ is an internal vertex, 
$|V(T_{G,P}(v))|=3$ and $|N_{T_{G,P}(w)}(w)| = 3$, and 
the second condition of Lemma~\ref{l:PATH-SUM-P-3} 
is not satisfied. 

Second, consider the case where $u$ is an internal vertex of $TS(p)$. 
Then, $v$ is also an internal vertex 
otherwise we cannot find $w$. 
When the depth of $T_{G,P}(v)$ is larger than one, 
the first condition of Lemma~\ref{l:PATH-SUM-P-3} 
is not satisfied. 
When the depth of $T_{G,P}(v)$ is one, 
$v$ is $a_i$ for some $i=0,1,\cdots, p+1$ and 
$w$ is a leaf. 
Thus, $|V(T_{G,P}(v))|=3$ and $|N_{T_{G,P}(w)}(w)| = 1$, and 
the second condition of Lemma~\ref{l:PATH-SUM-P-3}  
is not satisfied. 

Thus, every player is not unhappy and 
$TS(p)$ is a sum swap equilibrium. 

We then calculate the social cost in $TS(p)$, 
that consists of $5p+8$ vertices. 
\begin{eqnarray*}
&& SC(TS(p)) \\ 
&=& 
\sum_{u \in V(TS(p))} \sum_{v \in V(TS(p))} d_{TS(p)}(u,v) \\ 
&=& 
\sum_{i=0}^{p+1} \sum_{u \in V(H_i)} \sum_{v \in V(TS(p))} d_{TS(p)}(u,v) \\ 
&=& 
2 \sum_{u \in V(H_0) \cup V(H_1)} \sum_{v \in V(TS(p))} d_{TS(p)}(u,v) \\ 
&& + \sum_{i=2}^{p-1} \sum_{u \in V(H_i)} \sum _{v \in V(TS(p))} d_{TS(p)}(u,v) \\ 
&=& (45p^2 + 293p + 252) \\ 
&& + (\frac{25}{3}p^3 + 65p^2 - \frac{286}{3}p-136) \\ 
&=& \Theta(p^3) \\  
&=& \Theta(n^3) 
\end{eqnarray*}

When $n \neq 5p+8$ for any integer $p$, 
we have the same bound by attaching extra vertices to some $a_i$. 

Since star is a sum-swap equilibrium with the minimum social cost, 
the PoA of $TS(p)$ is $\Theta(n)$. 
\end{proof}

By $TS(p)$, we have the following corollary. 
\begin{corollary}
\label{c:DIAM-SUM-P-3}
There exists a sum-swap equilibrium of 
diameter $\Theta(n)$ for any $n \geq 13$. 
\end{corollary}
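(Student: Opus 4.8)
The plan is to simply read off the diameter of the tree $TS(p)$ that was already constructed and verified to be a sum-swap equilibrium in the proof of Theorem~\ref{t:POA-SUM-P-3}; since that proof establishes the equilibrium property, all that remains is to show $\mathrm{diam}(TS(p)) = \Theta(n)$ and to extend the conclusion to every $n \geq 13$ rather than only $n = 5p+8$.

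First I would compute the diameter of $TS(p)$. The graph has a ``spine'' path $a_0\, e_1\, e_2 \cdots e_p\, a_{p+1}$ of length $p+1$, every remaining vertex is a leaf attached to some $a_i$ at distance $1$, and for $1 \le i \le p$ the vertex $a_i$ sits at distance $1$ from the spine vertex $e_i$. Hence a longest shortest path goes from a leaf of $H_0$ to a leaf of $H_p$ (or $H_{p+1}$): for instance $b_0 \to a_0 \to e_1 \to \cdots \to e_p \to a_p \to b_p$, which has length $p+3$. A routine eccentricity check over the few vertex types shows no pair is farther apart, so $\mathrm{diam}(TS(p)) = p+3$. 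Since $|V(TS(p))| = 5p+8$, we get $p = (n-8)/5$ and therefore $\mathrm{diam}(TS(p)) = (n+7)/5 = \Theta(n)$, while $TS(p)$ is a sum-swap equilibrium by Theorem~\ref{t:POA-SUM-P-3}.

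Finally I would handle $n$ not of the form $5p+8$, as in the proof of Theorem~\ref{t:POA-SUM-P-3}: take the largest $p$ with $5p+8 \le n$ and attach the remaining $n-(5p+8) < 5$ vertices as extra leaf children of some internal vertex $a_i$. This keeps the graph a sum-swap equilibrium, because in Lemma~\ref{l:PATH-SUM-P-3} extra leaves at $a_i$ can only enlarge $|V(T_{G,P}(v))|$ or push the depth of some $T_{G,P}(v)$ above one, never increase the degree $|N_{T_{G,P}(w)}(w)|$ of the leaf $w$ appearing in condition (ii); so for every player and every candidate path at least one of the two negated conditions still holds. Adding leaves also does not shorten any shortest path, so the diameter remains at least $p+3 \ge (n+2)/5 = \Theta(n)$. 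The one point requiring a little care is precisely this padding step — checking that it preserves the equilibrium property and the $\Omega(n)$ diameter bound at the same time — but both follow from the monotonicity just described, so I expect no real obstacle.
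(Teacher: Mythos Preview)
Your proposal is correct and follows exactly the route the paper intends: the paper simply states ``By $TS(p)$, we have the following corollary'' with no further argument, so your explicit computation of $\mathrm{diam}(TS(p))=p+3=(n+7)/5$ and your padding argument for $n\not\equiv 8\pmod 5$ just fill in the details the paper leaves implicit. Your check that attaching extra leaves at some $a_i$ preserves the equilibrium (because whenever $w=a_i$ in Lemma~\ref{l:PATH-SUM-P-3} the depth condition already fails, so the enlarged $|N_{T_{G,P}(w)}(w)|$ is irrelevant) is the one nontrivial point, and you have it right.
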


We now demonstrate that when $k \geq 4$, 
sum-swap equilibrium for pessimistic players 
with $k$-local information achieves the same PoA as 
that with global information. 

We first show the following lemma. 
\begin{lemma}
\label{l:PATH-SUM-P-GEQ4}
In an arbitrary tree $G$ whose diameter is larger than two, 
there exists a path $P=vabw$ that satisfies the 
following two conditions; 
(i) $|V_{G_{T,P}(a),2}(a)| \leq |V_{G_{T,P}(b),2}(b)|$, 
and 
(ii) the depth of $T_{G,P}(a)$ is at most two. 
\end{lemma}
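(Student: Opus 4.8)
The plan is to take a shortest path realizing the diameter of $G$ and to work only at its second and third vertices. Since the diameter exceeds two, fix a shortest path $v_0 v_1 v_2 \cdots v_\ell$ with $\ell \geq 3$; then $v_0, v_1, v_2, v_3$ are four distinct consecutive vertices, so $P_1 = v_0 v_1 v_2 v_3$ and its reverse $Q = v_3 v_2 v_1 v_0$ are both length-three paths in $G$. The observation that drives everything is that $P_1$ and $Q$ delete exactly the same three edges, so if we set $C_1 := T_{G,P_1}(v_1) = T_{G,Q}(v_1)$ and $C_2 := T_{G,P_1}(v_2) = T_{G,Q}(v_2)$, then the two side trees are literally the same objects for both paths. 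I will show that either $P_1$, read as $v a b w$ with $a = v_1$ and $b = v_2$, or $Q$, read with $a = v_2$ and $b = v_1$, satisfies both conditions of the lemma.

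First I would verify condition (ii) for both candidates using maximality of the diameter path. Any neighbor $u$ of $v_1$ other than $v_0$ and $v_2$ must be a leaf, for otherwise a neighbor $u' \neq v_1$ of $u$ yields a path $u' u v_1 v_2 \cdots v_\ell$ of length $\ell + 1$; hence $C_1$ rooted at $v_1$ has depth at most one. Similarly, for any neighbor $u$ of $v_2$ other than $v_1$ and $v_3$, the subtree hanging off $u$ away from $v_2$ has depth at most one, since a vertex at distance two from $u$ inside it would sit at distance $2 + 1 + (\ell - 2) = \ell + 1$ from $v_\ell$; hence $C_2$ rooted at $v_2$ has depth at most two. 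Thus $P_1$ meets (ii) with $a = v_1$, and $Q$ meets (ii) with $a = v_2$.

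Next I would settle condition (i) with a one-line case split. Set $\beta_1 = |V_{C_1,2}(v_1)|$ and $\beta_2 = |V_{C_2,2}(v_2)|$; both are positive integers, and $\beta_1 = |V(C_1)|$ because $C_1$ has depth at most one. Since $\beta_1 \leq \beta_2$ or $\beta_2 \leq \beta_1$, there are two cases. If $\beta_1 \leq \beta_2$, use $P_1$ with $a = v_1$, $b = v_2$, so $T_{G,P_1}(a) = C_1$ and $T_{G,P_1}(b) = C_2$ and (i) reads $\beta_1 \leq \beta_2$. If $\beta_2 \leq \beta_1$, use $Q$ with $a = v_2$, $b = v_1$, so $T_{G,Q}(a) = C_2$ and $T_{G,Q}(b) = C_1$ and (i) reads $\beta_2 \leq \beta_1$. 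Either way a path with both properties exists, which proves the lemma.

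The step I expect to be the real obstacle is deciding where in the tree to look. A naive ``march inward along the diameter path'' or a global extremal argument over all length-three paths stalls, because condition (ii) is only automatic at the first two internal vertices from a diameter endpoint: a side branch of depth $d$ hanging off $v_i$ violates the diameter only once $d$ is comparable to $\min(i, \ell - i)$, so interior vertices may carry deep branches and fail (ii). Pinning the candidate ``$a$'' to $v_1$ or $v_2$ and noticing that reversing $P_1$ merely swaps the roles of $a$ and $b$ while leaving the side trees $C_1$ and $C_2$ fixed is exactly what collapses condition (i) into the trivial comparison of $\beta_1$ with $\beta_2$.
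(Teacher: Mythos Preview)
Your proof is correct and takes a genuinely different route from the paper's. The paper argues by induction on $\max\{d_a,d_b\}$, the larger of the two side-tree depths: starting from an \emph{arbitrary} length-three path $P=vabw$, if both side trees already have depth at most two it uses $P$ or its reverse; otherwise it descends into the deeper side tree, finds a new length-three path there, and observes that the new side-tree depths are strictly smaller, eventually reaching the base case. Your argument bypasses the induction entirely by choosing $P$ along a diameter-realizing path: the maximality of the diameter forces the side tree at $v_1$ to have depth at most one and the side tree at $v_2$ to have depth at most two, so the base case holds immediately, and the reversal trick dispatches condition (i). Your approach is shorter and more constructive (it pinpoints the witnessing path in one shot), while the paper's approach has the minor advantage of not needing to fix a diameter path in advance and would adapt more readily if one only knew a long path rather than a longest one.
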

\begin{proof}
We prove the lemma by induction. 
There exists at least one path of length at least three in $G$. 
We choose a path $P=vabw$ arbitrarily. 
Let $d_a$ and $d_b$ be the depth of $T_{G,P}(a)$ and 
$T_{G,P}(b)$, respectively. 

\noindent{\bf (Base case.)~}  
Consider the case where $\max\{d_a, d_b\} \leq 2$. 
Let $P_{rev} = wbav$. 
The two paths $P$ and $P_{rev}$ satisfies the second condition 
and either $P$ or $P_{rev}$ satisfies the first condition. 
Thus, the statement holds when $\max\{d_a, d_b\} \leq 2$. 

\noindent{\bf (Induction step.)~}  
Assume the statement holds when $\max\{d_a,d_b\} \leq d-1$ 
for $d \geq 3$. 
Consider the case where  $\max\{d_a,d_b\} = d$. 
Without loss of generality, we assume $d_a=d \geq 3$. 
Hence, there exists at least one path 
$P'=a a' b' w'$ in $T_{G,P}(a)$. 
See Figure~\ref{fig:path-SUM-SG-4}.
Let $d_{a'}$ and $d_{b'}$ be the depth of $T_{G,P'}(a')$ and 
$T_{G,P'}(b')$, respectively. 
Clearly, $d_{a}>d_{a'}$ and $d_{a'}>d_{b'}$ hold and 
the statement holds by the induction hypothesis. 

\begin{figure}[t]
\centering 
\includegraphics[width=8cm]{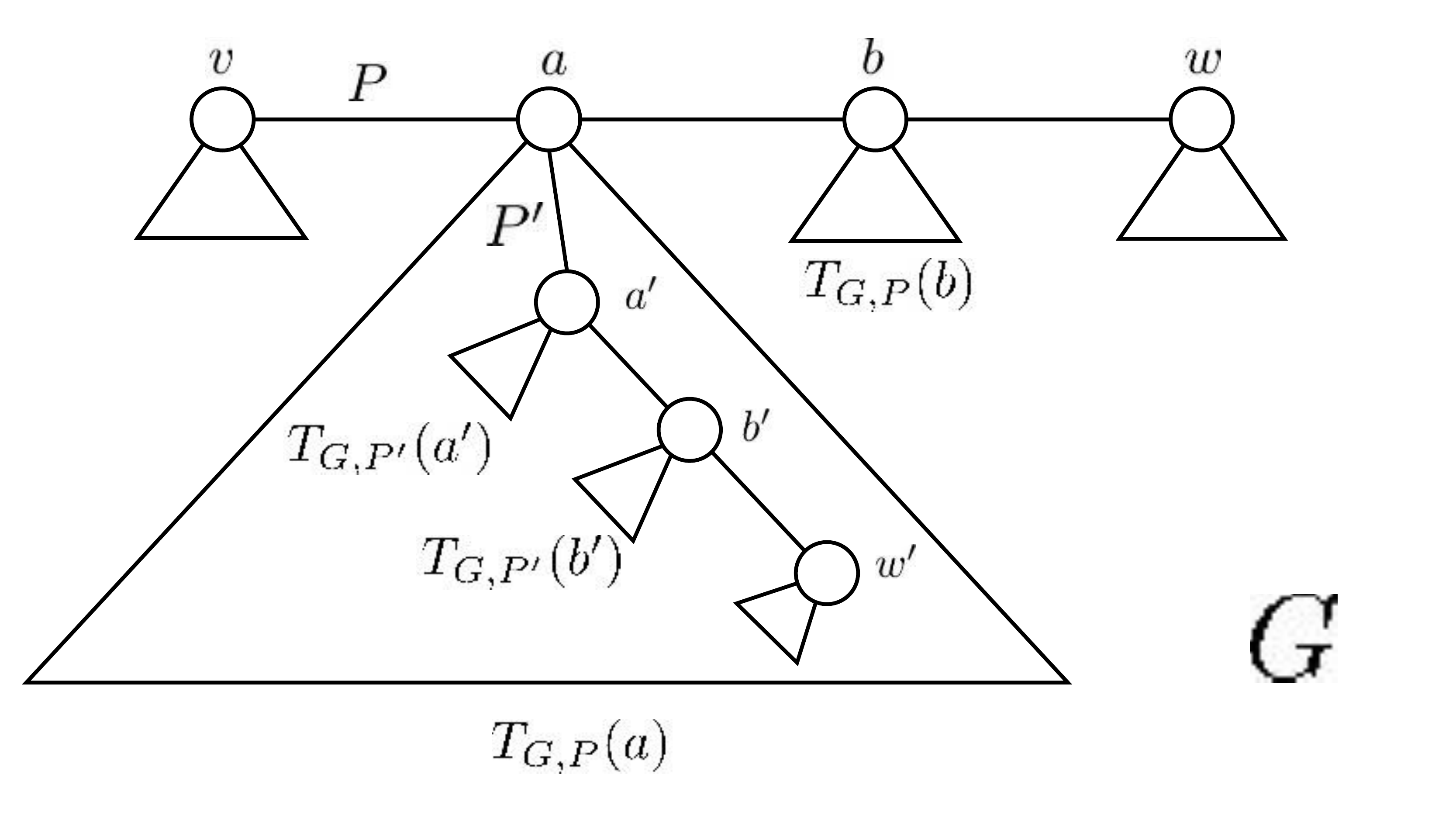}
\caption{Induction step of Lemma~\ref{l:PATH-SUM-P-GEQ4}.}
\label{fig:path-SUM-SG-4}
\end{figure}
\end{proof}

By Lemma~\ref{l:PATH-SUM-P-GEQ4}, 
in any graph $G$ whose diameter is larger than two 
there exists an unhappy player. 
\begin{theorem}
\label{t:POA-SUM-P-GEQ4}
When $G_0$ is a tree and $k \geq 4$, 
any sum-swap equilibrium is a star and 
$\PoA_{\text{SUM}}(n,n-1,k) = 1$. 
\end{theorem}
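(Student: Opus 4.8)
The plan is to combine the previously established dynamics with the structural lemma just proved. By Theorem~\ref{t:FIP-SUM}, starting from a tree $G_0$ every SUM-SG $(G_0,k)$ converges to a sum-swap equilibrium, and every intermediate graph (in particular the equilibrium) is a tree, since an edge swap applied to a tree yields a tree. So it suffices to show that the only tree that is a sum-swap equilibrium for $k\geq 4$ is the star $K_{1,n-1}$ (assuming $n$ is large enough that a star is itself not unhappy, which it is not, since a leaf of a star has $V_{G,4}(u)=V$ and no swap can help).

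First I would argue the contrapositive: if a tree $G$ is \emph{not} a star, then its diameter is at least three (a tree of diameter at most two is exactly a star), so Lemma~\ref{l:PATH-SUM-P-GEQ4} applies and furnishes a path $P=vabw$ with (i) $|V_{T_{G,P}(a),2}(a)|\leq |V_{T_{G,P}(b),2}(b)|$ and (ii) the depth of $T_{G,P}(a)$ is at most two. The key step is then to show that the endpoint player $v$ is unhappy because of the edge swap removing $\{v,a\}$ and adding $\{v,b\}$ — note $b\in V_{G,4}(v)$ since $d_G(v,b)=2$ actually, wait: $d_G(v,a)=1$, $d_G(a,b)=1$, so $d_G(v,b)=2\leq 4$, so this swap is available to $v$ for any $k\geq 3$, and for $k\geq 4$ the player $v$ can moreover see enough of $T_{G,P}(a)$ (depth $\leq 2$ from $a$, hence distance $\leq 4$ from $v$) to be \emph{certain} that the swap strictly helps. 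Concretely, after the swap, every vertex in $T_{G,P}(a)$ moves one step farther from $v$, and every vertex in $V_{T_{G,P}(b),2}(b)\setminus\{$those already counted$\}$ — at least the neighbors of $b$ and their neighbors, minus overlap — moves one step closer; the worst-case global graph has $b$ with no further neighbors and $a$'s subtree as seen, so $\Delta_v \geq |V_{T_{G,P}(b),2}(b)| - |V_{T_{G,P}(a)}| $, and I must check condition (i) together with the depth bound gives $|V(T_{G,P}(a))| = |V_{T_{G,P}(a),2}(a)| \leq |V_{T_{G,P}(b),2}(b)|$, with the inequality strict because $w\in T_{G,P}(b)$ contributes at distance $\geq 1$ from $b$ but $v$'s swap also brings $b$ itself closer — so the gain strictly exceeds the loss. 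Once $v$ is unhappy, $G$ is not an equilibrium, contradiction; hence every sum-swap equilibrium is a star, and since the star minimizes social cost among all connected graphs on $n$ vertices and $n-1$ edges, $\PoA_{\mathrm{SUM}}(n,n-1,k)=1$.

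The main obstacle I anticipate is pinning down the exact worst-case accounting for $v$ with $k=4$: I need $v$'s $4$-local view to contain \emph{all} of $T_{G,P}(a)$ (guaranteed by the depth-$\leq 2$ condition, since such vertices are within distance $1+1+2=4$ of $v$) so that the "loss" side of $\Delta_v$ is known exactly rather than lower-bounded pessimistically, and simultaneously to contain enough of $b$'s neighborhood (within distance $2+2=4$ of $v$) so that the "gain" side is a genuine certainty; I then have to verify that in the adversary's graph these are the only changes and that the inequality is strict, which is where condition (i) of Lemma~\ref{l:PATH-SUM-P-GEQ4} and the fact that $b$ has at least one neighbor beyond $a$ (namely $w$) are both used. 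The remaining bookkeeping — that vertices outside these two subtrees have unchanged distance from $v$, and that a star is an equilibrium and social-cost-minimal — is routine.
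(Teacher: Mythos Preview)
Your approach is essentially the paper's: invoke Lemma~\ref{l:PATH-SUM-P-GEQ4} on any tree of diameter at least three, then show the endpoint $v$ is unhappy via the swap $(a,b)$ because its $4$-local view contains all of $T_{G,P}(a)$ (depth $\le 2$) and enough of $b$'s side to certify a strict gain. One small correction: $w$ is not in $T_{G,P}(b)$ but in its own component $T_{G,P}(w)$ (all three edges of $P$ are removed), so the clean bound is $\Delta_v(a,b)\ge |V_{T_{G,P}(b),2}(b)|+1-|V_{T_{G,P}(a),2}(a)|>0$, where the ``$+1$'' is exactly the contribution of $w$; with that fix your accounting matches the paper's line for line, and the preliminary appeal to Theorem~\ref{t:FIP-SUM} is unnecessary since the PoA statement concerns equilibria directly.
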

\begin{proof}
Assume that there exists a sum-swap equilibrium $G$ 
whose diameter is larger than two. 
By Lemma~\ref{l:PATH-SUM-P-GEQ4}, 
there exists a path $P=vabw$ that satisfies the two conditions. 
Hence, $v$ is unhappy because 
\begin{eqnarray*}
 \Delta c_v(a,b) \geq |V_{T_{G,P}(b),2}(b)| +1 - |V_{T_{G,P}(a),2}(a)| > 0. 
\end{eqnarray*}
This is a contradiction and $G$ is not a sum-swap equilibrium. 
Hence, the diameter of a sum-swap equilibrium is smaller than 
or equal to two and we have the statement. 
\end{proof}

Consequently, the ``visibility'' of pessimistic players 
has a significant effect on the PoA of the SUM-SG. 
We then demonstrate that this is also the case for the MAX-SG. 

\begin{theorem}
\label{t:POA-NMAX-P-3} 
When $n \geq 6$ and $k = 3$, 
$\PoA_{\text{MAX}}(n,n-1,k) = \Theta(n)$. 
\end{theorem}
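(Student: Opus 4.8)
The plan is to prove the lower bound $\PoA_{\text{MAX}}(n,n-1,3)=\rOmega(n)$ by exhibiting, for every $n\geq 6$, a max-swap equilibrium (necessarily a tree) of diameter $\rOmega(n)$, and to obtain the matching upper bound $\PoA_{\text{MAX}}(n,n-1,3)=\Order(n)$ essentially for free: any max-swap equilibrium on $n$ vertices and $n-1$ edges is a tree, so $SC\leq n(n-1)$, while the star minimises and has $SC=2(n-1)+1=\Theta(n)$; hence the ratio is $\Order(n)$.

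For the lower bound I would first isolate the obstruction to unhappiness, in the spirit of Lemma~\ref{l:PATH-SUM-P-3}: \emph{a player $u$ in a tree $G$ is unhappy in the MAX-SG with $k=3$ if and only if $u$ has a neighbour $v$ with $\deg_G(v)=2$ whose other neighbour is not a leaf and every other neighbour of $u$ has a subtree of depth at most $1$}. The necessity proof tests two canonical members of ${\mathcal G}_u$. Taking $H$ equal to $u$'s view (the subgraph of $G$ induced by $V_{G,3}(u)$, which is a connected tree) forces any unhappy swap to \emph{strictly} decrease $u$'s eccentricity inside that ball, which in particular forbids pushing a visible vertex to distance $\geq 4$. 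Taking, for each vertex $z$ at distance exactly $3$ from $u$, the view with a long pendant path appended at $z$, forces the swap to strictly decrease $d_G(u,z)$ for \emph{every} such $z$; since a swap $(v,w)$ alters distances only inside the branch of $v$, this pins all distance-$3$ vertices into that one branch and, by a short case analysis on whether $w$ is a child or a grandchild of $v$, forces $\deg_G(v)=2$ with $v$'s other neighbour non-leaf. Conversely, when the local picture is exactly the forbidden one, one checks that every compatible $H$ yields improvement exactly $1$, so $\Delta_u>0$.

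Given the lemma I would take $G$ to be the \emph{caterpillar} $CAT(m)$: a spine $a_1a_2\cdots a_m$ with two pendant leaves attached to each $a_i$, so $n=3m$ (for $n\not\equiv 0\pmod 3$ attach one or two extra leaves to $a_1$; this changes nothing below). Every spine vertex then has degree at least $3$ and every leaf has degree $1$, so \emph{no} vertex of $CAT(m)$ has a degree-$2$ neighbour; by the lemma no player is unhappy, hence $CAT(m)$ is a max-swap equilibrium for every $k$, in particular for $k=3$ --- even though, under global information, it is far from an equilibrium (its diameter exceeds $3$). Its diameter is $m+1=\Theta(n)$, so every vertex has eccentricity at least $\lceil (m+1)/2\rceil$ and $SC(CAT(m))\geq n\lceil (m+1)/2\rceil=\Theta(n^2)$, whereas the star has $SC=\Theta(n)$; therefore $\PoA_{\text{MAX}}(n,n-1,3)\geq SC(CAT(m))/SC(\text{star})=\rOmega(n)$, which together with the $\Order(n)$ upper bound gives $\Theta(n)$. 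The hypothesis $n\geq 6$ is exactly what is needed for $CAT(m)$ with $m\geq 2$ to exist, and the finitely many small values of $n$ are covered by the same family.

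The step I expect to be the main obstacle is the structural lemma, specifically identifying the correct worst-case member of ${\mathcal G}_u$: because $\Delta_u$ is a \emph{minimum} over all compatible graphs (not merely trees), one must argue that whenever the local picture around $u$ is not the forbidden one, \emph{some} compatible graph makes the swap non-improving, and the right choice switches between ``the view itself'' (when the swap does not even help in the view) and ``the view plus a long pendant path at an unmoved distance-$3$ vertex'' (when the swap helps locally but a far vertex is left behind). The accounting near the ends of the spine, where the radius-$3$ ball may already be the whole graph so that ${\mathcal G}_u$ admits extra hidden vertices but no hidden edges incident to visible vertices, needs a careful if routine check.
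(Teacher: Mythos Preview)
Your argument is correct and leads to the same construction idea as the paper, but you route it through extra machinery. The paper simply writes down a caterpillar with no degree-$2$ vertices --- a spine of inner vertices, each carrying one pendant leaf, with two extra leaves at the ends so that \emph{every} inner vertex has degree exactly $3$ --- and then checks the three vertex types (endpoint leaves, pendant leaves, inner vertices) by hand: for each candidate swap it names a single worst-case $H$ (either the bare view, or a long hidden tail behind an unaffected distance-$3$ vertex) that kills the improvement. No general lemma is stated.

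Your approach instead proves a full structural characterisation of unhappiness in the MAX-SG at $k=3$, and then observes that your $CAT(m)$ (two pendants per spine vertex) has no degree-$2$ vertex and hence no player meeting the unhappiness criterion. This is more work than the paper invests, but it buys you something: the lemma explains \emph{why} caterpillars with all spine degrees $\geq 3$ are equilibria, and immediately covers any such variant, whereas the paper's proof is tied to its particular tree. The ``only if'' direction of your lemma is the delicate part, and your two canonical witnesses (the bare view; the view with a long pendant at a fixed distance-$3$ vertex) are exactly the right ones --- they are precisely the ad hoc witnesses the paper uses case by case. Your deduction that all distance-$3$ vertices must lie in a single branch, forcing the other subtrees to have depth at most $1$, and that the swapped neighbour must then have degree $2$ with a non-leaf beyond it, is correct.

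One small slip: your claim that ``$CAT(m)$ is a max-swap equilibrium for every $k$'' is false for $k\geq 4$ (the paper's Theorem~\ref{t:POA-MAX-P-GEQ4} shows all max-swap equilibria then have diameter at most $3$); your lemma is specific to $k=3$, and that is all you need. The rest --- the $\Order(n)$ upper bound from $SC \le n(n-1)$ against the star's $\Theta(n)$ --- is exactly what the paper does.
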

\begin{proof}
We show that a tree shown in Figure~\ref{fig:equ-maxsg-3} 
is a max-swap equilibrium. 
\begin{figure}[t]
\centering 
\includegraphics[width=6cm]{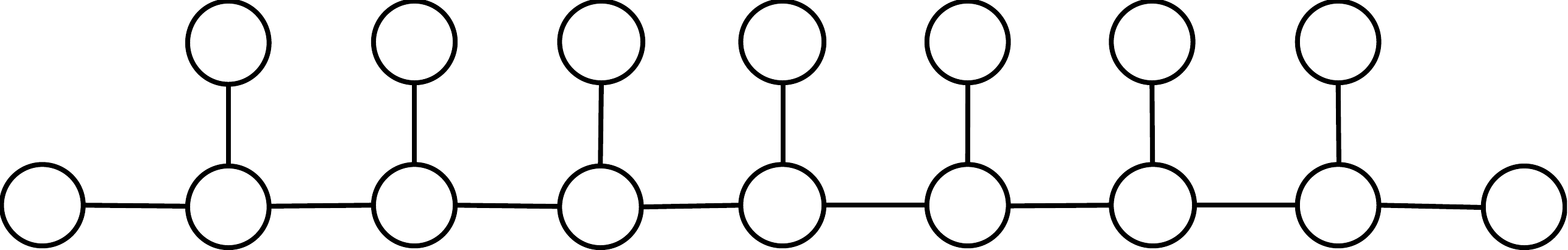}
\caption{A max-swap equilibrium of $16$ players.}
\label{fig:equ-maxsg-3}
\end{figure}

First, consider the two endpoint vertices. 
Each endpoint player has one incident edge and 
any edge swap involving this edge does not decrease the maximum distance 
to the vertices in its view. 
In the worst-case global graph the player at distance two 
has no other vertices. 
Thus, the two endpoint players are not unhappy. 

Second, consider other leaves. 
Each leaf player has one incident edge and 
any edge swap at a leaf increases the maximum distance 
to some vertex in its view. 
In the worst-case global graph, 
there is a long path starting from such a vertex. 
Hence, the leaf players are not unhappy. 

Finally, consider inner vertices. 
Each inner player has three edges but 
it cannot remove the edge connecting it to a leaf 
because such an edge swap disconnects 
the graph. 
If the player remove an edge incident to 
another inner vertex and create a new edge, 
by the same discussion as above, 
this edge swap increases the maximum distance 
to some vertex in its view and 
the player is not unhappy. 

Consequently, the graph shown in Figure~\ref{fig:equ-maxsg-3} 
is a max-swap equilibrium. 
By adding inner vertices (with its child), 
we have the similar equilibrium for any even $n\geq 6$. 
For odd $n\geq 6$, we attach an extra player to an inner vertex 
and obtain a max swap equilibrium. 

Since the star graph is an max-swap equilibrium with the minimum 
social cost, we have $\PoA = \Theta(n)$. 
\end{proof}

By the proof of Theorem~\ref{t:POA-NMAX-P-3}, 
we have following corollary.  
\begin{corollary}
\label{c:DIAM-MAX-P-3}
There exists a max-swap equilibrium of diameter $\Theta(n)$ 
for any $n \geq 6$. 
\end{corollary}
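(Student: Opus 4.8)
The plan is to reuse, essentially verbatim, the family of trees constructed in the proof of Theorem~\ref{t:POA-NMAX-P-3}. That proof already does the hard work: it exhibits the tree of Figure~\ref{fig:equ-maxsg-3}, verifies that every vertex (the two spine endpoints, the other leaves, and the inner vertices) is not unhappy, and explains how to extend the construction to an arbitrary $n \geq 6$ — for even $n$ by appending inner vertices together with their pendant leaves, and for odd $n$ by additionally attaching one extra leaf to an inner vertex. So the only thing left to establish for the corollary is a lower bound of $\Omega(n)$ on the diameter of these trees; the matching upper bound $\Order(n)$ is trivial for any graph on $n$ vertices.

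First I would observe that the equilibrium of Figure~\ref{fig:equ-maxsg-3} is (up to the small pendant structures at its two ends) a caterpillar: its inner vertices form a spine path, and each inner vertex carries a pendant leaf. The extension operation used in Theorem~\ref{t:POA-NMAX-P-3} adds a bounded number of vertices — an inner vertex with its pendant leaf, or a single extra leaf — each time, and in particular lengthens the spine by one each time a new inner vertex is inserted. Hence a tree in this family on $n$ vertices has spine length $\Theta(n)$, and together with the two pendant paths hanging at the ends of the spine, its diameter is $\Theta(n)$. The key point to check here — and it is immediate from the caterpillar structure — is that neither the ``add an inner vertex with its child'' step nor the ``attach an extra leaf to an inner vertex'' step ever decreases the diameter.

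Finally, since the star on $n$ vertices is a max-swap equilibrium of diameter $2$ (and, as already noted in the proof of Theorem~\ref{t:POA-NMAX-P-3}, has the minimum social cost among all graphs on $n$ vertices), the trees above furnish, for every $n \geq 6$, a max-swap equilibrium of diameter $\Theta(n)$, which is the claim. I do not expect a genuine obstacle in this argument: the substantive content — that these trees are equilibria for all $n \geq 6$ when $k = 3$ — is established in Theorem~\ref{t:POA-NMAX-P-3}, and the corollary merely records the geometric consequence that their diameter is linear in $n$.
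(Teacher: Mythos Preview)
Your proposal is correct and takes essentially the same approach as the paper: the corollary is stated immediately after Theorem~\ref{t:POA-NMAX-P-3} with the remark ``By the proof of Theorem~\ref{t:POA-NMAX-P-3}, we have the following corollary,'' and your argument simply makes explicit the observation that the caterpillar-like equilibria constructed there have spine length (hence diameter) linear in $n$. The paper offers no separate proof beyond this back-reference, so your write-up is, if anything, more detailed than the original.
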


We now demonstrate that when $k \geq 4$, 
any MAX-SG by pessimistic players with $k$-local information 
achieves the same PoA as that of players with global information. 
The following lemma shows that 
in any tree of diameter larger than three, 
there is at least one unhappy player. 
\begin{lemma}
\label{l:PATH-MAX-P-GEQ-4}
In any tree $G$ whose diameter is larger than three, 
there exists a path $P=vabcw$ that satisfies the 
following two conditions;  
(i) $P$ starts from a leaf $v$, and 
(ii) the depth of $T_{G,P}(a)$ is at most one. 
\end{lemma}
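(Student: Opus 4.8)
The plan is to prove Lemma~\ref{l:PATH-MAX-P-GEQ-4} by a descent argument that mirrors the inductive structure used in Lemma~\ref{l:PATH-SUM-P-GEQ4}. First I would fix a diametral path $D = v_0 v_1 \ldots v_\ell$ in $G$, where $\ell \geq 4$ by hypothesis. Both endpoints $v_0$ and $v_\ell$ are leaves, so I already have a path starting from a leaf that has length at least four. The remaining work is to exhibit, among the length-four prefixes of diametral (or near-diametral) paths, one whose second tree $T_{G,P}(a)$ has depth at most one.

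The key step is the following observation about diametral paths: for the prefix $P = v_0 v_1 v_2 v_3 v_4$ (reading $a = v_1$, $b = v_2$, $c = v_3$, $w = v_4$), the tree $T_{G,P}(a) = T_{G,P}(v_1)$ has depth at most one. Indeed, suppose $T_{G,P}(v_1)$ had depth at least two; then there would be a path of length two from $v_1$ inside $T_{G,P}(v_1)$, say $v_1 s t$ with $s,t \notin V(D)$. But then $t\, s\, v_1\, v_2 \ldots v_\ell$ is a path of length $\ell + 1 > \ell$, contradicting the maximality of the diametral path. Hence the depth of $T_{G,P}(v_1)$ is at most one, and since $v_0$ is a leaf, both conditions of the lemma hold for $P = v_0 v_1 v_2 v_3 v_4$. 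Here I am using that $v_0$ is a leaf of $G$: because $D$ is diametral, $v_0$ can have no neighbour outside $D$ (such a neighbour would extend the path) and no neighbour on $D$ other than $v_1$ (that would create a cycle in the tree $G$), so $v_0$ is indeed a leaf.

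I would present this as a direct (non-inductive) argument: take the diametral path, read off its first five vertices as $v, a, b, c, w$, verify condition (i) since $v = v_0$ is a leaf, and verify condition (ii) by the extremality argument above. A subtle point to handle carefully: the lemma requires $T_{G,P}(a)$ to have depth \emph{at most one}, meaning $a$ may have leaf-children off the path but nothing deeper; the diametrality argument gives exactly this, because any depth-two branch at $a = v_1$ would yield a longer path. I expect the main obstacle — really the only place demanding care — is making the ``longer path'' contradiction airtight: one must check that the extended walk $t\, s\, v_1\, v_2 \ldots v_\ell$ is genuinely a simple path (no repeated vertices), which follows because $s, t \notin V(D)$ and $G$ is a tree so the component $T_{G,P}(v_1)$ meets $D$ only at $v_1$. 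Once that is nailed down, the conclusion is immediate, and no further case analysis on the shape of $G$ is needed.
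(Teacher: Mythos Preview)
Your argument is correct. Taking a diametral path $D=v_0v_1\cdots v_\ell$ with $\ell\geq 4$, setting $P=v_0v_1v_2v_3v_4$, and noting that any depth-two branch at $v_1$ would extend $D$ past its maximal length, is a clean and complete proof of the lemma. Your checks that $v_0$ is a leaf and that the extended walk $t\,s\,v_1\cdots v_\ell$ is simple (because $G$ is a tree and $s,t\notin V(D)$) handle exactly the points that need care.

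The paper takes a different route: it fixes an \emph{arbitrary} length-four path $P=vabcw$ starting at some leaf $v$, and if the depth of $T_{G,P}(a)$ exceeds one, it descends into $T_{G,P}(a)$, picks a leaf $v'$ there with parent $a'$, and asserts that the new path $P'=v'a'b'c'w'$ satisfies condition~(ii). This is a two-step descent rather than a one-shot extremal argument, and as written it is terser than your version (in particular it does not spell out why the \emph{second} choice must succeed; the later proof of Theorem~\ref{t:POA-MAX-WP-GEQ4} makes this explicit by taking $v'$ of \emph{largest} depth). Your diametral-path approach is more direct: extremality of $D$ does the work in a single stroke, with no need for iteration or for a careful choice of which leaf to descend to. The paper's approach, on the other hand, avoids invoking the diameter globally and stays closer in spirit to the inductive template of Lemma~\ref{l:PATH-SUM-P-GEQ4}. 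One minor stylistic remark: your opening sentence promises a ``descent argument,'' but what you actually deliver is the direct extremal proof; you may want to align the framing with the content.
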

\begin{proof}
There exists at least one path of length at least four in $G$. 
We arbitrarily choose a path $P=vabcw$ that starts 
from some leaf $v$. 
If the depth of $T_{G,P}(a)$ is smaller than two, 
the statement holds. 
If the depth of $T_{G,P}(a)$ is larger than one, 
choose a leaf $v'$ in $T_{G,P}(a)$ 
and its parent vertex, say $a'$. 
There exists at least one path $P'=v'a'b'c'w'$ 
and $P'$ satisfies the second condition. 
\end{proof}

\begin{theorem}
\label{t:POA-MAX-P-GEQ4}
When $G_0$ is a tree and $k \geq 4$, 
the diameter of any max-swap equilibrium is at most three and 
$\PoA_{\text{MAX}}(n,n-1,k) \leq 3/2$. 
\end{theorem}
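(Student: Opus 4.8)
The plan is to first invoke Lemma~\ref{l:PATH-MAX-P-GEQ-4} to bound the diameter of every max-swap equilibrium, and then to read off the PoA from the two tree shapes that survive.

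\textbf{Diameter bound.} An edge swap keeps the number of edges equal to $n-1$, and a pessimistic player never performs a swap that may disconnect the graph (such a swap has worst-case gain $-\infty$), so every max-swap equilibrium reachable from a tree is again a tree. Suppose for contradiction that some max-swap equilibrium $G$ is a tree of diameter larger than three. By Lemma~\ref{l:PATH-MAX-P-GEQ-4} there is a path $P=vabcw$ in $G$ with $v$ a leaf and the depth of $T_{G,P}(a)$ at most one. I will show that $v$ is unhappy because of the edge swap $(a,b)$, i.e. removing $\{v,a\}$ and adding $\{v,b\}$; this is admissible for $k\geq 4$ since $d_G(v,b)=2$, and $v$ sees the whole path $vabcw$ and the whole shallow component $T_{G,P}(a)$. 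The key observation is that, because $v$ is a leaf, for every global graph $H$ compatible with $v$'s view and the resulting graph $H'$ we have $c_v(H)=1+\max_{z\neq v}d_{H\setminus\{v\}}(a,z)$ and $c_v(H')=1+\max_{z\neq v}d_{H\setminus\{v\}}(b,z)$, using $H'\setminus\{v\}=H\setminus\{v\}$. Since $v$ sees $w$ at distance four, $c_v(H)\geq 4$, so the vertex farthest from $a$ cannot lie in $T_{G,P}(a)$ (depth $\leq 1$) and must lie on the $b$-side of the edge $\{a,b\}$; a two-case check — a vertex reached from $b$ through $a$ lies in $T_{G,P}(a)$ and is within distance $2$ of $b$, while every other vertex is exactly one closer to $b$ than to $a$ — then yields $\max_z d(b,z)=\max_z d(a,z)-1$ in \emph{every} such $H$. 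Hence $\Delta_v(a,b)=1>0$, contradicting that $G$ is an equilibrium. Therefore the diameter of any max-swap equilibrium is at most three.

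\textbf{PoA.} A tree of diameter at most three on $n\geq 3$ vertices is a star or a ``double star'' (two adjacent centers, each carrying at least one leaf). A direct computation gives $SC(\text{star})=2n-1$ and $SC(\text{double star})=3n-2$, so every max-swap equilibrium has social cost at most $3n-2$. The star minimizes social cost over all trees on $n$ vertices (the fact already used for players with global information), so $\min_{G'\in\mathcal G(n,n-1)}SC(G')=2n-1$. Consequently $\PoA_{\text{MAX}}(n,n-1,k)\leq\frac{3n-2}{2n-1}$, and since $2(3n-2)=6n-4<6n-3=3(2n-1)$ this ratio is strictly less than $3/2$; the cases $n\leq 2$ are trivial. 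This proves $\PoA_{\text{MAX}}(n,n-1,k)\leq 3/2$.

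The only non-routine step is verifying that the leaf $v$ is genuinely unhappy, that is, that $\Delta_v(a,b)>0$. The subtlety to handle carefully is that the identity $\max_z d(b,z)=\max_z d(a,z)-1$ must hold uniformly over all global graphs consistent with $v$'s $4$-local view; this is exactly why $k\geq 4$ is needed, so that $v$ can see both the entire shallow component $T_{G,P}(a)$ and the full path $vabcw$ that forces $c_v\geq 4$. Everything else — the reduction to the class of diameter-$\leq 3$ trees and the social-cost arithmetic — is elementary.
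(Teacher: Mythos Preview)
Your proof is correct and follows the same approach as the paper: invoke Lemma~\ref{l:PATH-MAX-P-GEQ-4} to locate a path $P=vabcw$ with $v$ a leaf and $T_{G,P}(a)$ shallow, argue that $v$ is unhappy via the swap $(a,b)$, and then bound the PoA by comparing to the star. Your write-up is actually more detailed than the paper's---you spell out why $\Delta_v(a,b)=1$ holds uniformly over all compatible $H$ (the paper just asserts $\Delta c_v(a,b)\geq 1$) and you compute the exact ratio $(3n-2)/(2n-1)<3/2$ rather than stating the bound---but the underlying argument is the same.
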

\begin{proof}
Assume that there exists a max-swap equilibrium $G$ 
whose diameter is larger than three. 
By Lemma~\ref{l:PATH-MAX-P-GEQ-4}, 
there exists a path $P=vabcw$ such that 
$v$ is a leaf and the depth of $T_{G,P}(a)$ is at most one. 
Player $v$ is unhappy because $\Delta c_v(a,b) \geq 1$. 
This is a contradiction and $G$ is not a max-swap equilibrium. 
Thus, the diameter of any max-swap diameter is at most three. 

Because a equilibrium with the minimum cost is a star, 
the PoA is bounded by $3/2$. 
\end{proof}

\section{Swap games with non-pessimistic players} 

We demonstrated that when $k=2,3$, 
the PoA for pessimistic players is $\Theta(n)$ in 
the SUM-SG and MAX-SG. 
In this section, we introduce less pessimistic players 
to obtain smaller PoA for these cases. 
We consider two types of non-pessimistic players: 
A player $u$ is \emph{weakly pessimistic} 
if $u$ is unhappy when there exists 
an edge swap $(v,w)$ at $u$ 
such that $\Delta c_{u}(v,w) \geq 0$. 

A player $u$ is \emph{optimistic} 
if its $\Delta_{c_u}(v,w)$ is defined as 
\begin{eqnarray*}
 \Delta_{c_{u}}(v,w) = \max_{H \in {\mathcal G}_u} 
(c_{u}(H) - c_{u}(H')), 
\end{eqnarray*}
where $H'$ is a graph obtained by an edge swap 
$(v,w)$ at $u$ in $H \in {\mathcal G}$. 

Weakly pessimistic players and optimistic players 
do not perform any edge swap in the SUM-SG and MAX-SG 
when $k=1$. 
Different from Theorem~\ref{t:UHP-SUMMAX-P-1-2}, 
weakly pessimistic players change their strategies 
when $k=2$. 
However, when $k>2$, 
weakly pessimistic players cause a cycle of edge 
swaps from an initial path graph. 

\begin{example}
Let $P=u_0 u_1 u_2 \ldots$ be a path of $n$ ($\geq 2k$) 
weakly pessimistic players with $k$-local information. 
In the SUM-SG and MAX-SG, 
player $u_k$ is unhappy because of the edge swap $(u_{k-1},u_0)$. 
% of course there are other edge swaps. 
However after $u_k$ performs this edge swap, 
the graph is $u_{k-1} u_{k-2} \ldots u_0 u_k u_{k+1} \ldots u_n$ 
and $u_k$ is again unhappy because of the 
edge swap $(u_0,u_{k-1})$. 
By selecting $u_k$ forever, the graph never reach an 
equilibrium. 
\end{example} 

We now consider a more restricted \emph{round robin scheduling} 
for selecting moving players. 
In a round-robin scheduling, players have a fixed ordering and 
at each time step a moving player is selected according to this 
ordering. 
Consider $n$ players $u_1, u_2, \ldots, u_n$ and 
let the subscript $i$ indicate the order of player $u_i$. 
In $G_0$ if $u_1$ is unhappy, $u_1$ is selected as the moving player. 
Otherwise, we check $u_2, u_3, \ldots$ until we 
find an unhappy player. 
Thus, the unhappy player with the smallest order, say $j$, is 
selected as a moving player. 
In $G_1$ if $u_{j+1}$ is unhappy, $u_{j+1}$ is selected as 
the moving player. 
Otherwise, we check $u_{j+2}, u_{j+3}, \ldots$ 
until we find an unhappy player. 
If $u_n$ is selected in $G_t$, 
the check start with $u_1$ in $G_{t+1}$. 
%In this way, a moving player in $G_{t+1}$ is selected based on 
%the order of moving player of $G_t$. 

However, the round-robin scheduling still admits 
a best response cycle. 
\begin{example}
Consider a a path $P_0 = u_3 u_1 u_4 u_2$ of four weakly pessimistic 
players with $k$-local information for $k \geq 2$. 
For the SUM-SG, 
in $P_0$, $u_1$ is unhappy because of the edge swap 
$(u_4, u_2)$. 
When $u_0$ performs this edge swap, 
a new path $P_1=u_3 u_1 u_2 u_4$ is formed. 
In $P_1$, $u_2$ is unhappy because of the edge swap 
$(u_1, u_3)$. 
When $u_2$ performs this edge swap, 
a new path $P_3=u_1 u_3 u_2 u_4$ is formed. 
In $P_3$, $u_3$ is unhappy because of the edge swap 
$(u_2, u_4)$. 
When $u_3$ performs this edge swap, 
a new path $P_4=u_1 u_3 u_4 u_2$ is formed. 
In $P_4$, $u_4$ is unhappy because of the edge swap 
$(u_3, u_1)$. 
When $u_4$ performs this edge swap, 
a new path $P_5=u_3 u_1 u_4 u_2 = P_0$ is formed. 
This is a best response cycle in the SUM-SG and 
MAX-SG. 
\end{example}

Moreover, we can show that there exists a best response 
cycle from an initial tree. 
\begin{theorem}
\label{t:CYCLE-SUMMAX-WP}
When $k \geq 3$, 
for weakly pessimistic players with $k$-local information, 
there exist infinitely many trees from which 
the SUM-SG and MAX-SG admit best response cycles 
under the round-robin scheduling. 
\end{theorem}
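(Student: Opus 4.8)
The plan is to lift the finite round-robin best response cycle on the four-vertex path $P_{0}=u_{3}u_{1}u_{4}u_{2}$, constructed above, to an infinite family of trees. The first idea is an \emph{embedding}: for a size parameter $m$ build a tree $T_{m}$ on $\Theta(m)$ vertices that contains the four-vertex ``rotor'' $\{u_{1},u_{2},u_{3},u_{4}\}$ together with an auxiliary part $X_{m}$, fix a round-robin order that lists $u_{1},u_{2},u_{3},u_{4}$ first and the vertices of $X_{m}$ last, and start from a configuration $C_{0}$ in which the rotor forms $P_{0}$. Since all four swaps of the Example touch only edges among $u_{1},u_{2},u_{3},u_{4}$, and since in each configuration the rotor player that acts has eccentricity at most two inside the rotor (hence, for $k\ge 3$, a saturated local view), each rotor player would keep the same feasible swaps, the same $\Delta$-values and the same unique best response as in the Example, provided $X_{m}$ sits ``far'' from the rotor and does not expose a useful new swap to any rotor player. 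Moreover, because the rotor players come first and the cycle uses one move per player, the scheduler inspects the vertices of $X_{m}$ only once per round, in the rest configuration $C_{0}$; so the whole construction would reduce to: design $X_{m}$ and its attachment so that in $C_{0}$ no vertex outside the rotor is weakly pessimistically unhappy.

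That reduction is the crux, and it is also why the embedding cannot be done carelessly. Weakly pessimistic players deviate on cost-neutral swaps, and once a tree carries any nontrivial pendant such players appear: a vertex $b$ adjacent to a vertex $a$ that carries a subtree $S$ can delete $\{a,b\}$ and re-attach into $S$, pulling all of $S$ one step toward $b$ while pushing only $a$ one step away, a move whose worst-case gain is nonnegative once $|S|\ge 1$ and strictly positive once $b$ sees two vertices of $S$ --- which for $k\ge 3$ happens whenever $S$ is attached near $b$. So one cannot hang a large $X_{m}$ off any vertex of the rotor, nor connect it by a long link, because either a rotor vertex or an interior link vertex would ``grab'' whatever hangs beyond it and destroy the cycle. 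The only way I see to make an embedding work is to take $X_{m}$ to be a structure that is ``rigid'' for weakly pessimistic players in the relevant configuration (small diameter; every interior swap either disconnecting or strictly cost-increasing even against the adversary) and to attach it so that no near-rotor vertex can grab it; proving such rigidity --- for the SUM cost this needs a counting argument, since the size of $X_{m}$ must outweigh the bounded contributions of the link and the rotor, while for the MAX cost it is cheaper because a single hidden long path behind the frontier already spoils almost every candidate swap --- consistently with keeping the rotor's behaviour unchanged, is the technical heart of the proof and where I expect the real obstacle.

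If a clean embedding turns out to be impossible --- which my analysis suggests is a genuine danger, since essentially every large pendant gets grabbed --- the alternative is to exhibit directly, for each $m$ in an infinite set, a tree $T_{m}$ of size $\Theta(m)$ that is \emph{itself} a ``large rotor'': a spine-based tree (for instance a variant of the tree $TS(p)$ from Theorem~\ref{t:POA-SUM-P-3}, which is an equilibrium for pessimistic players but not for weakly pessimistic ones) together with a round-robin order under which every step is a cost-neutral ``local permutation'' of the same shape, so that one full round returns the graph to $T_{m}$, exactly as the four-vertex path does in the Example. Either way, the one step I expect to be genuinely delicate is controlling which player the scheduler picks: because weakly pessimistic unhappiness is so easy to trigger, one must check very carefully, at each of the finitely many configurations of the cycle, that the lowest-indexed unhappy player is the intended one and that its best response is the intended move --- and, since the theorem asserts the cycle for both games, this verification must be carried out for the SUM-SG and the MAX-SG separately.
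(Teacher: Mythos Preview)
What you have submitted is a plan, not a proof: you outline two strategies, diagnose the obstacles in the embedding approach quite accurately, and gesture toward a ``direct large rotor'' alternative, but you never exhibit a concrete family of trees, a round-robin order, or a closed sequence of best responses. Your analysis of why the embedding fails is sound --- any pendant $X_m$ attached to the four-vertex path either enters a rotor vertex's $k$-view and alters its best response, or creates an interior vertex that a weakly pessimistic player will grab --- and your conclusion that one must build a large rotor from scratch is the right one. But the construction itself, together with the per-configuration verifications you correctly flag as ``the one step that is genuinely delicate,'' is simply absent.

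The paper carries out your second route with a construction far simpler than the spine-based variants of $TS(p)$ you suggest. The tree is a \emph{double star} on $2m+3$ vertices: two centres $u_{2m+2}$ and $u_{2m+3}$ joined by an edge, with the $m+1$ odd-indexed leaves $u_1,u_3,\ldots,u_{2m+1}$ attached to $u_{2m+3}$ and the $m$ even-indexed leaves $u_2,u_4,\ldots,u_{2m}$ attached to $u_{2m+2}$. The diameter is $3$, so for $k\ge 3$ every player sees the whole graph and the worst-case global graph is the actual one. In $G_0$, leaf $u_1$ has exactly $m$ siblings while the opposite centre also carries $m$ leaves, so the swap $(u_{2m+3},u_{2m+2})$ has $\Delta=0$ in both SUM and MAX and is $u_1$'s best response; being weakly pessimistic, $u_1$ moves. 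This tips the balance so that $u_2$ is now in the symmetric position, and the parity-alternating labelling makes the seesaw continue through $u_2,u_3,\ldots$ in round-robin order. When the scheduler eventually reaches the two centres they are not unhappy (every feasible swap strictly increases their cost), and the cycle restarts at $u_1$. The scheduler difficulty you anticipated is thus dissolved not by a rigidity argument for an auxiliary piece but by the labelling itself: at every step the intended mover is automatically the lowest-indexed unhappy player, and the isomorphism type of the tree never changes, so a single verification covers all steps and both cost functions simultaneously.
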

\begin{proof}
We present an initial tree $G_0$ for the SUM-SG 
by $n=2m+3$ weakly pessimistic 
players $u_1, u_2, \ldots, u_{2m+3}$ ($m=2,3,\ldots$). 
The players are divided into two subtrees rooted at 
$u_{2m+3}$ and $u_{2m+2}$, respectively; 
$u_{2m+3}$ has $m+1$ leaves $u_1, u_3, \ldots, u_{2m+1}$ and 
$u_{2m+2}$ has $m$ leaves $u_2, u_4,\ldots, u_{2m}$. 
Additionally, $G_0$ contains an edge connecting $u_{2m+3}$ 
and $u_{2m+2}$. 
See Figure~\ref{fig:seesaw} for $n=9$. 

\begin{figure}[t]
\centering 
\includegraphics[width=3cm]{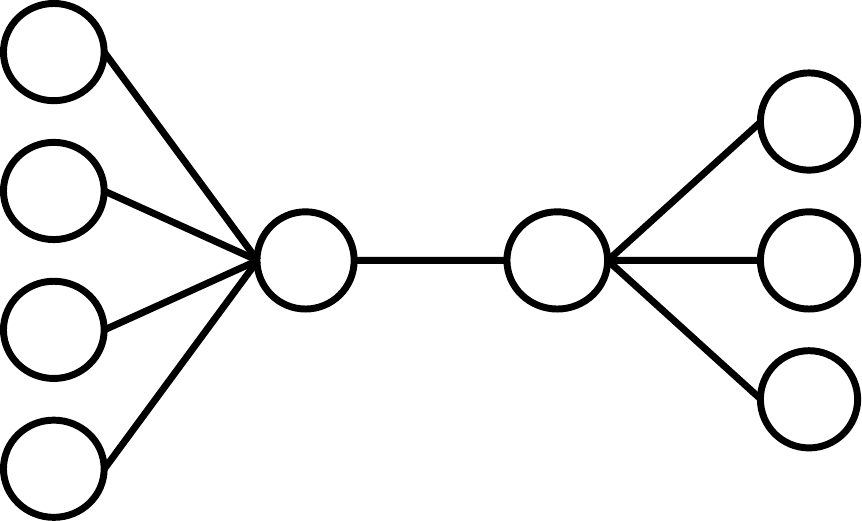}
\caption{An example of $G_0$ for nine players. 
The two inner vertices are $u_9$ with four leaves 
($u_1$, $u_3$, $u_5$, and $u_7$) 
and $u_8$ with three leaves ($u_2$, $u_4$, and $u_6$). }
\label{fig:seesaw}
\end{figure}

In $G_0$, $u_1$ is unhappy because of the edge swap 
$(u_{2m+3}, u_{2m+2})$; 
$u_{2m+3}$ and $u_{2m+2}$ 
have $m$ leaves except $u_1$. 
When $u_1$ performs this edge swap, 
a new graph $G_1$ is formed, where 
$u_{2m+3}$ has $m$ leaves and $u_{2m+2}$ has $m+1$ leaves. 
In $G_1$, $u_2$ is unhappy because of the edge swap 
$(u_{2m+2}, u_{2m+3})$. 
When $u_2$ performs this edge swap, 
a new graph $G_2$ is formed, where 
$u_{2m+3}$ has $m+1$ leaves and $u_{2m+2}$ has $m$ leaves. 
In this way, the leaves of a graph keep on changing their parent 
under the round robin scheduling. 
The seesaw game continues until $G_{2m}$ 
where $u_{2m+3}$ has $m$ leaves with even subscripts 
and $u_{2m+2}$ has $m+1$ leaves with odd subscripts. 
Player $u_{2m+2}$ is not unhappy in $G_{2m}$; 
the only possibility is an edge swap 
$(u_{2m+3}, u_{2\ell})$ 
for some $\ell \in \{2,4,\ldots, m\}$, 
but it increases its cost. 
Player $u_{2m+3}$ is not unhappy in $G_{2m}$ 
with the same reason. 
Player $u_1$ is unhappy in $G_{2m}$, 
and the leaf players start the seesaw game again. 

This cycle is also a best response cycle in the MAX-SG. 
\end{proof}

Next, we present the PoA for 
weakly pessimistic players with $2$-local information. 

\begin{lemma}
\label{l:PATH-SUM-WP-2}
In the SUM-SG and MAX-SG by weakly pessimistic players 
with $2$-local information, 
a player $u$ is unhappy if and only if 
there is a path $uvw$ where the degree of $v$ is two. 
\end{lemma}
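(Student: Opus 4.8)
The statement is an equivalence; as in Lemma~\ref{l:PATH-SUM-P-3} I will argue it for a tree $G$, treating the SUM-SG and the MAX-SG together, since an edge swap perturbs the two cost functions through the same collection of shifted distances. I will lean on three facts: (a) since $v\in N_G(u)$, every neighbour of $v$ lies in $V_{G,2}(u)$, so $u$ knows $N_G(v)$ and in particular $\deg_G(v)$ exactly; (b) $w$ lies at distance exactly $2$ from $u$ in every compatible graph; and (c) the view of $u$, being the induced subgraph of the tree $G$ on $V_{G,2}(u)$, is itself a tree, so $v$ and $w$ lie in well-defined subtrees hanging off $u$.

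\noindent\emph{Sufficiency.} Suppose a path $uvw$ with $\deg_G(v)=2$ exists, hence $N_G(v)=\{u,w\}$. I claim the swap $(v,w)$ at $u$ has $\Delta_u(v,w)\ge 0$, which makes a weakly pessimistic $u$ unhappy. Let $H$ be any compatible graph; since the only neighbours of $v$ in $H$ are $u$ and $w$, every shortest $u$--$x$ path that uses the edge $\{u,v\}$ must begin $u,v,w,\ldots$. Deleting $\{u,v\}$ and inserting $\{u,w\}$ therefore leaves $v$ at distance $2$ (via $w$), moves $w$ from distance $2$ to $1$, moves every vertex whose old shortest path ran through $\{u,v\}$ at least one step closer---there is at least $w$ itself---and changes no other distance. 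Hence the SUM improvement is at least $1-1=0$ (one unit gained at $w$ against one lost at $v$), and since $d_H(u,w)=2$ already forces the eccentricity of $u$ to be at least $2$, placing $v$ at distance $2$ does not enlarge it while every other distance is non-increasing, so the MAX improvement is also $\ge 0$. As this holds for all compatible $H$, $\Delta_u(v,w)\ge 0$.

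\noindent\emph{Necessity.} Suppose $u$ is unhappy via a swap $(v,w)$, so $\Delta_u(v,w)\ge 0$ and in particular the swap disconnects no compatible graph. In the tree $G$ the equality $d_G(u,w)=2$ comes from a unique path $uxw$ with $x\in N_G(u)$. If $x\neq v$, then $v$ and $w$ lie in different subtrees of $u$, and an adversary may leave $v$'s subtree attached to the rest of the graph only through the edge $\{u,v\}$; then removing $\{u,v\}$ and adding $\{u,w\}$ disconnects that subtree, giving $\Delta_u(v,w)=-\infty$, a contradiction. Hence $x=v$, i.e.\ $uvw$ is a genuine path with $v\sim w$. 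If $\deg_G(v)=1$ the swap isolates $v$ in every compatible graph, again a contradiction, so $\deg_G(v)\ge 2$. If $\deg_G(v)\ge 3$, pick a neighbour $t$ of $v$ with $t\notin\{u,w\}$; it lies at distance $2$ from $u$ and, once $\{u,v\}$ is removed, in the subtree hanging off $v$. Under the swap $v$ moves from distance $1$ to $2$ and $t$ from distance $2$ to $3$, a combined loss of at least $2$, whereas the only vertex the view forces to move closer is $w$ (gain $1$; an adversary makes $w$ a leaf and may hang an arbitrarily deep subtree off $t$ to worsen the loss further). Thus $c_u(H)-c_u(H')\le 1-2<0$ for SUM, and for MAX an adversary places the deepest vertex below $t$ so that the eccentricity of $u$ strictly increases; either way $\Delta_u(v,w)<0$, a contradiction. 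Therefore $\deg_G(v)=2$ and $uvw$ is the required path.

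\noindent\emph{Main obstacle.} The delicate part is the distance accounting in the necessity direction: one must pin down exactly which vertices an adversary can be forced to move closer under the swap $(v,w)$---essentially only $w$ and its descendants, the latter of which the adversary suppresses---so that the single guaranteed unit of gain can never cancel the loss incurred \emph{simultaneously} at $v$ and at $t$. Deriving the MAX-SG conclusion from the same case split, rather than re-running the whole argument, is the other fiddly point, and it hinges on the observation that in the adversary's graph the vertex witnessing the eccentricity of $u$ sits below $t$ both before and after the swap. I would also flag explicitly where the tree hypothesis enters---namely in guaranteeing that deleting $\{u,v\}$ detaches exactly the subtree rooted at $v$, and in forcing disconnection when $x\neq v$.
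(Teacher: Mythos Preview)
Your proof is correct and follows essentially the same route as the paper's: sufficiency via the swap $(v,w)$ when $\deg(v)=2$ (net SUM change $0$, MAX change $\ge 0$), and necessity by arguing first that $v$ must be adjacent to $w$ (otherwise an adversarial $H$ is disconnected by the swap) and then that $\deg(v)\le 2$ (otherwise the loss at $v$ together with a third neighbour $t$ outweighs the single guaranteed gain at $w$). Your write-up is in fact more careful than the paper's in spelling out the adversary's choices---making $w$ a leaf, hanging depth below $t$---and in noting where the tree hypothesis is used; the paper's proof is terser and leaves these points implicit, but the logical skeleton is the same.
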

\begin{proof}
We first consider the SUM-SG. 
If there is a path $uvw$ in $G$ such that the degree of $v$ is two, 
$u$ is unhappy because of the edge swap $(v,w)$. 
In the worst-case global graph, $w$ does not have any adjacent player 
except $v$ and the cost decreases by one. 
On the other hand, the edge swap $(v,w)$ increases the distance 
between $u$ and $v$ by one. 

If $u$ is unhappy because of the edge swap $(v,w)$ in $G$, 
$d_G(u,v)=1$ and $d_G(u,w) = 2$. 
Let $G'$ be the graph obtained by the edge swap $(v,w)$ at $u$ in $G$. 
Graph $G'$ must contain a path between $v$ and $w$ 
otherwise players are disconnected. 
Additionally, $v$ is adjacent to $w$ in $G$ 
otherwise in a worst case global graph 
$v$ is not reachable from $u$. 
If $v$ has other adjacent vertices than $u$ and $w$, 
the edge swap $(v,w)$ increases the cost of $u$ by at lease two 
while the edge swap decreases the cost of $u$ by at least one.
Hence, the degree of $v$ must be two in $G$. 
Consequently, we have the statement. 

Then we consider the MAX-SG. 
If there is a path $uvw$ where the degree of $v$ is two, 
$u$ is unhappy because of the edge swap $(v,w)$. 
If $u$ is unhappy because of the edge swap $(v,w)$, 
then the degree of $v$ is two in $G$, 
otherwise the edge swap increases the cost of $u$ by at least one. 
Consequently, we have the statement. 
\end{proof}

By Lemma~\ref{l:PATH-SUM-WP-2}, 
a graph is a swap equilibrium if it 
does not have any vertex of degree two. 
The graph shown in Figure~\ref{fig:equ-maxsg-3} 
is a swap equilibrium with diameter $\Theta(n)$. 
We have the following theorem. 
\begin{theorem}
\label{t:POA-SUMMAX-WP-2}
For weakly pessimistic players 
with $2$-local information, 
when $n \geq 4$, 
$\PoA_{\text{SUM}}(n,n-1,2) = \Theta(n)$ and 
$\PoA_{\text{MAX}}(n,n-1,2) = \Theta(n)$.  
\end{theorem}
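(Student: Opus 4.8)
The plan is to read the equilibria off Lemma~\ref{l:PATH-SUM-WP-2} and then sandwich both the numerator and the denominator of the PoA by diameter-based estimates. By Lemma~\ref{l:PATH-SUM-WP-2}, a tree $G$ is simultaneously a weakly-pessimistic sum-swap equilibrium and a weakly-pessimistic max-swap equilibrium exactly when $G$ has no vertex of degree two: a degree-two vertex $v$ makes each of its two neighbours unhappy via the path through $v$, and conversely, when $G$ has no vertex of degree two there is no path $uvw$ with the middle vertex of degree two, so no player is unhappy. In particular the caterpillar $C$ of Figure~\ref{fig:equ-maxsg-3} — a spine of degree-three inner vertices, each carrying one pendant leaf, with an extra pendant vertex at each spine end — has only vertices of degree $1$ and $3$, hence is a swap equilibrium for both games; moreover it can be built with $\Theta(n)$ spine vertices, and hence diameter $\Theta(n)$, for every sufficiently large $n$, with one or two extra pendant leaves at a spine end absorbing the residue of $n$ modulo the period of the construction (as already noted in the proof of Theorem~\ref{t:POA-NMAX-P-3}).

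For the upper bounds I would use that every $G \in {\mathcal G}(n,n-1)$ is a tree, so $\mathrm{diam}(G) \le n-1$, whence $SC_{\text{SUM}}(G) \le n^2(n-1) = \Order(n^3)$ and $SC_{\text{MAX}}(G) \le n(n-1) = \Order(n^2)$. On the other side, $SC_{\text{SUM}}(G') \ge n(n-1)$ for every such $G'$ since each of the $n(n-1)$ ordered distinct pairs contributes at least one, and $SC_{\text{MAX}}(G') \ge n$ trivially; since the star attains $SC_{\text{SUM}} = 2(n-1)^2$ and $SC_{\text{MAX}} = 2n-1$, we get $\min_{G'} SC_{\text{SUM}}(G') = \Theta(n^2)$ and $\min_{G'} SC_{\text{MAX}}(G') = \Theta(n)$. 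Dividing yields $\PoA_{\text{SUM}}(n,n-1,2) = \Order(n)$ and $\PoA_{\text{MAX}}(n,n-1,2) = \Order(n)$.

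For the matching lower bounds I would estimate $SC(C)$. Because $\mathrm{diam}(C) = \Theta(n)$, every vertex of $C$ has eccentricity at least $\mathrm{diam}(C)/2 = \Theta(n)$, so $SC_{\text{MAX}}(C) = \Theta(n^2)$; and the $\Theta(n)$ spine vertices of $C$ sit on an isometric path, so the distances between them alone contribute $2\sum_{i<j}(j-i) = \Theta(n^3)$ to $SC_{\text{SUM}}(C)$, giving $SC_{\text{SUM}}(C) = \Theta(n^3)$ (the matching $\Order(n^3)$ bound being immediate). Combining with the denominators from the previous paragraph, $\PoA_{\text{SUM}}(n,n-1,2) \ge SC_{\text{SUM}}(C)/\Theta(n^2) = \rOmega(n)$ and $\PoA_{\text{MAX}}(n,n-1,2) \ge SC_{\text{MAX}}(C)/\Theta(n) = \rOmega(n)$, which together with the upper bounds gives $\Theta(n)$ in both cases.

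I do not expect a real obstacle: once Lemma~\ref{l:PATH-SUM-WP-2} is in hand the equilibrium structure is transparent, and the only genuine estimates are the eccentricity bound for $SC_{\text{MAX}}(C)$ and the spine-path bound for $SC_{\text{SUM}}(C)$, both routine. The one mildly delicate point is bookkeeping for small $n$ and for $n$ not of the exact form produced by the construction — e.g. $n = 4,5$, where essentially only the star avoids degree two and the ratio is a harmless constant — but this does not affect the asymptotic $\Theta(n)$ claim and is exactly why the statement is phrased for $n \geq 4$.
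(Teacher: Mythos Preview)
Your proposal is correct and follows essentially the same approach as the paper: invoke Lemma~\ref{l:PATH-SUM-WP-2} to see that a tree is a weakly-pessimistic swap equilibrium precisely when it has no degree-two vertex, and then use the caterpillar of Figure~\ref{fig:equ-maxsg-3} (all vertices of degree $1$ or $3$, diameter $\Theta(n)$) as the bad equilibrium. The paper simply states these two observations and declares the theorem, leaving the social-cost arithmetic implicit; your write-up supplies the routine diameter/eccentricity/spine-path estimates that the paper elides, but there is no difference in strategy.
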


We then present the PoA of the SUM-SG by 
weakly pessimistic players with $3$-local information. 

\begin{theorem}
\label{t:POA-SUM-WP-3}
For weakly pessimistic players with $3$-local information, 
when $n \geq 16$, 
$\PoA_{\text{SUM}}(n,n-1,3)=\Theta(n)$. 
\end{theorem}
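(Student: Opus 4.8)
The plan is to exhibit, for infinitely many $n$, a sum-swap equilibrium for weakly pessimistic players with $3$-local information whose diameter is $\Theta(n)$ and whose maximum degree is bounded. Such a tree has social cost $\Theta(n^3)$ (the dominating term being the pairwise distances along a spine of length $\Theta(n)$), while the star, which minimises the social cost over all $n$-vertex trees, has social cost $\Theta(n^2)$; combined with the trivial bound $SC(G)\le 2\binom{n}{2}(n-1)=\Order(n^3)$ valid for every tree, this gives $\PoA_{\text{SUM}}(n,n-1,3)=\Theta(n)$.

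The construction is a strengthened variant $TS'(p)$ of the tree $TS(p)$ from the proof of Theorem~\ref{t:POA-SUM-P-3}. The reason $TS(p)$ itself fails is precisely the gap between pessimism and weak pessimism: in $TS(p)$ a leaf $b_i$ of $a_i$ can replace its edge to $a_i$ by an edge to the spine vertex $e_i$, after which the three vertices $a_i,c_i,d_i$ each move one step away while $e_i$ and its two spine neighbours each move one step closer, so $\Delta_{b_i}(a_i,e_i)=0$ in the worst-case compatible graph --- harmless for a pessimistic player but enough to make a weakly pessimistic $b_i$ unhappy. I would therefore define $TS'(p)$ by giving every $a_i$, including the two endpoints $a_0$ and $a_{p+1}$, four leaf children instead of three, so that $|N_G(a_i)|=5$ while every spine vertex keeps $|N_G(e_i)|=3$; note that $TS'(p)$ then has no vertex of degree two. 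Its smallest member has $n=16$, which accounts for the hypothesis; when $n$ is not of the form $|V(TS'(p))|$ one pads by attaching extra leaves to some $a_i$, which only raises degrees and cannot destroy the equilibrium property.

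The core of the argument is verifying that no player of $TS'(p)$ is weakly-pessimistic unhappy. First I would establish the analogue of Lemma~\ref{l:PATH-SUM-P-3}: since a player $u$ with $3$-local information can only attach to a new neighbour at distance $2$ or $3$, and since in a tree an edge swap whose new endpoint does not lie in the removed subtree disconnects the graph, the only swaps worth considering at $u$ run along a path $P=uvw$ (new neighbour at distance $2$) or along a path $P=uvxw$ (new neighbour at distance $3$). Analysing the worst-case compatible graph gives, when $T_{G,P}(v)$ has depth at most one, a worst-case improvement of $|N_{T_{G,P}(w)}(w)|+1-|V(T_{G,P}(v))|$ along $uvw$ and of $4-2|N_G(v)|$ along $uvxw$; in both cases the worst-case improvement is unbounded below when the depth of $T_{G,P}(v)$ exceeds one, because the adversary may hang arbitrarily many hidden vertices below $v$. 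Hence $u$ is weakly-pessimistic unhappy if and only if some path $uvw$ has $T_{G,P}(v)$ of depth at most one with $|V(T_{G,P}(v))|\le|N_{T_{G,P}(w)}(w)|+1$, or some path $uvxw$ has $|N_G(v)|=2$. I would then run through the three kinds of vertices of $TS'(p)$: a leaf of some $a_i$, a vertex $a_i$, and a spine vertex $e_i$. The only swaps along a depth-at-most-one path $uvw$ are a leaf of $a_i$ attaching to $e_i$, where $|V(T_{G,P}(a_i))|=4>3\ge|N_{T_{G,P}(e_i)}(e_i)|+1$, and $e_i$ attaching to a leaf of a neighbouring $a_i$, where $|V(T_{G,P}(a_i))|=4>1=0+1$; every swap along a path $uvw$ through a spine vertex $v=e_j$, and every swap along a path $uvxw$, is ruled out because $T_{G,P}(e_j)$ always has depth at least two and $TS'(p)$ has no degree-two vertex. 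Thus $TS'(p)$ is a sum-swap equilibrium.

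Finally, $TS'(p)$ still consists of a spine of $\Theta(p)$ vertices with $\Order(1)$ vertices hanging off each, so a summation identical in form to the one in the proof of Theorem~\ref{t:POA-SUM-P-3} gives $SC(TS'(p))=\Theta(p^3)=\Theta(n^3)$, and dividing by the star's $\Theta(n^2)$ yields the required $\Omega(n)$ lower bound, matching the trivial upper bound. I expect the main obstacle to lie in the characterisation step: correctly pinning down the extra ``$+1$'' in the distance-$2$ improvement --- it is exactly this term that kills $TS(p)$ for weakly pessimistic players --- and recognising that, unlike in the pessimistic analysis, distance-$3$ swaps now matter and introduce a genuinely new obstruction tied to degree-two vertices, which the construction must eliminate.
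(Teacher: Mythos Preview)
Your proposal is correct and follows essentially the same approach as the paper: both construct $TS'(p)$ by adding one extra leaf to every $a_i$ in $TS(p)$ (so each $a_i$ has four leaf children and degree $5$, while spine vertices keep degree $3$), yielding $n=6p+10$ and hence $n\ge 16$ for $p\ge 1$. The paper in fact omits the verification entirely (``it is easy to check''), whereas you supply the weakly-pessimistic analogue of Lemma~\ref{l:PATH-SUM-P-3} with the crucial extra ``$+1$'' in the distance-$2$ threshold and the new distance-$3$ obstruction tied to degree-$2$ vertices, and then check all three vertex types; this is exactly what a complete proof of the theorem requires.
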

\begin{proof}
We present a sum-swap equilibrium in the same manner as 
Theorem~\ref{t:POA-SUM-P-3}. 
For $i=1,2,\cdots, p$, 
$H'_i$ is a tree in which $a_i$ has five children 
$b_i$, $c_i$, $d_i$, $e_i$, and $f_i$. 
For $i=0,p+1$, $H'_i$ is a tree rooted at $a_i$ with 
four children $b_i$, c$_i$, $d_i$, and $e_i$. 
Then $TS'(p)$ is is a tree defined by 
\begin{eqnarray*}
V(TS(p)) &=& \bigcup_{i=0}^{p+1} V(H_i) \\ 
E(TS(p)) &=& \bigcup_{i=0}^{p+1} E(H_i) \cup
\{\{a_0, f_1\}, \{f_{p}, a_{p+1}\}\} \cup 
\bigcup_{i=1}^{p-1} \{\{f_i,f_{i+1}\}\}. 
\end{eqnarray*}
Figure~\ref{fig:EQU-SUM-WP-3} shows $TS'(7)$ as an example. 
\begin{figure}[t]
\centering 
\includegraphics[width=8cm]{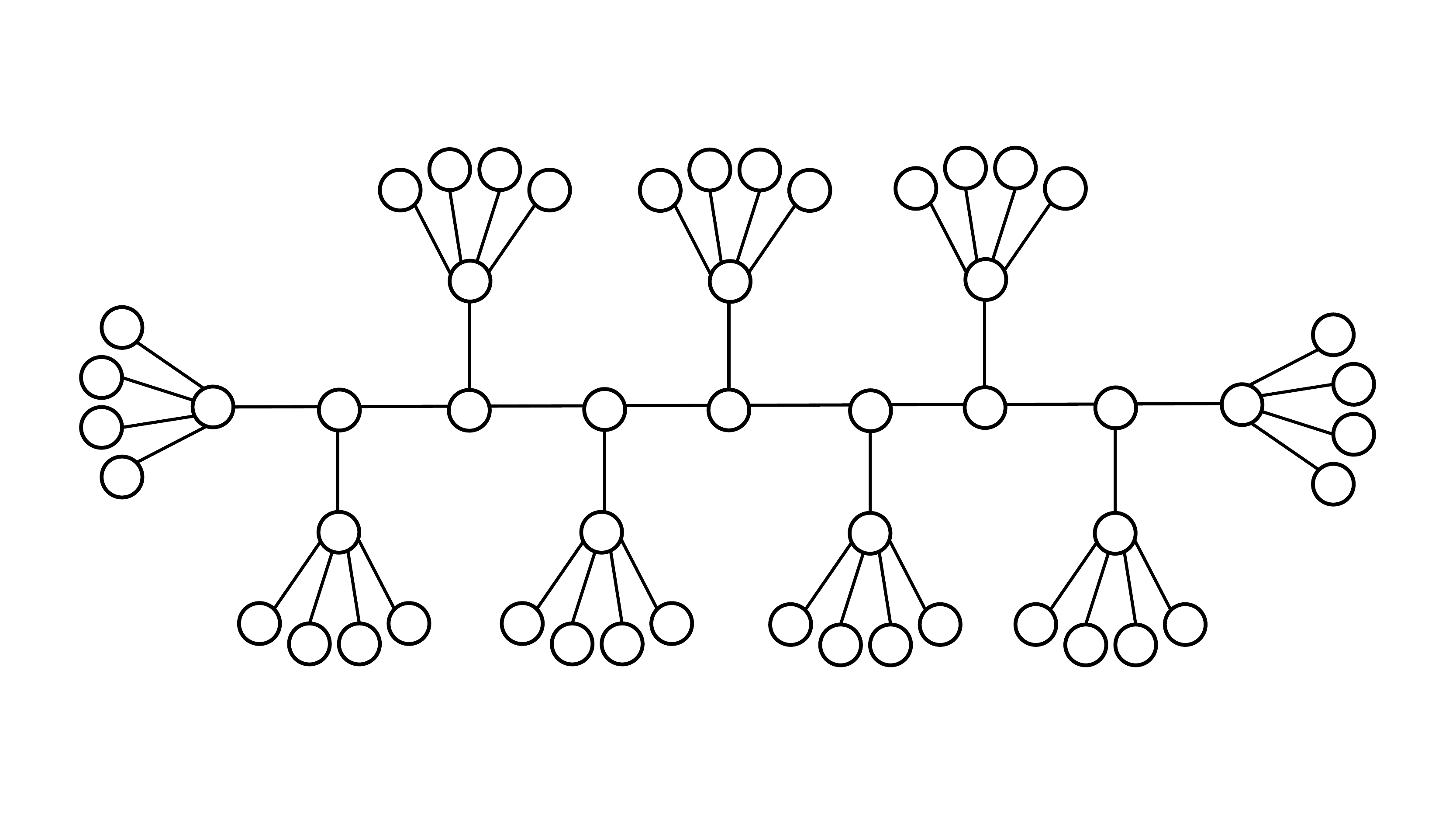}
\caption{$TS'(7)$}
\label{fig:EQU-SUM-WP-3}
\end{figure}

We omit the detailed proof 
because it easy to check whether every player is unhappy or not 
in $TS'(p)$. 
\end{proof}

By Theorem~\ref{t:POA-SUM-P-GEQ4}, 
we have the following theorem 
because when a pessimistic player $u$ is unhappy in graph $G$, 
$u$ is unhappy in $G$ when $u$ is a weakly pessimistic player. 
\begin{theorem} 
\label{t:POA-SUM-WP-GEQ4}
When $k \geq 4$, % and $n \geq 4$
for weakly pessimistic players with $k$-local information, 
the diameter of a sum-swap equilibrium is at most two 
and $\PoA_{\text{SUM}}(n,n-1,k)=1$. 
\end{theorem}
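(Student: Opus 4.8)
The plan is to reduce Theorem~\ref{t:POA-SUM-WP-GEQ4} directly to Theorem~\ref{t:POA-SUM-P-GEQ4} by a monotonicity argument on the set of unhappy players. The observation that drives the proof is already stated in the preamble of the theorem: a weakly pessimistic player is \emph{more} willing to move than a pessimistic player, since its unhappiness threshold is $\Delta c_u(v,w)\geq 0$ rather than $\Delta c_u(v,w)>0$. Hence whenever a pessimistic player is unhappy in a graph $G$ because of an edge swap $(v,w)$, the same player is unhappy in $G$ as a weakly pessimistic player with the same edge swap. Taking the contrapositive, any sum-swap equilibrium for weakly pessimistic players with $k$-local information is also a sum-swap equilibrium for pessimistic players with $k$-local information.

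Concretely, I would argue as follows. Suppose $G$ is a sum-swap equilibrium for weakly pessimistic players with $k$-local information, $k\geq 4$, and assume toward a contradiction that its diameter exceeds two. By the monotonicity observation, $G$ is then also a sum-swap equilibrium for pessimistic players with $k$-local information. But Theorem~\ref{t:POA-SUM-P-GEQ4} states that every such equilibrium (starting from a tree, which the SG preserves) is a star, hence has diameter at most two — a contradiction. Therefore the diameter of any weakly pessimistic sum-swap equilibrium is at most two. Since a star uniquely minimizes the social cost among connected graphs on $n$ vertices and $n-1$ edges, and every diameter-at-most-two tree on $n-1$ edges is a star, every such equilibrium attains the social optimum, so $\PoA_{\text{SUM}}(n,n-1,k)=1$.

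There is essentially no hard step here: the entire content of the theorem is the one-line monotonicity observation plus an appeal to the already-established pessimistic result. The only point that deserves a sentence of care is making sure the reduction is valid \emph{graph-by-graph} rather than merely for the limiting equilibria — i.e., that the witness path $P=vabw$ and its edge swap produced in Lemma~\ref{l:PATH-SUM-P-GEQ4} and used in the proof of Theorem~\ref{t:POA-SUM-P-GEQ4} yields $\Delta c_v(a,b)>0$, which a fortiori gives $\Delta c_v(a,b)\geq 0$, so the same player is flagged unhappy under the weakly pessimistic rule. Since that strict inequality is exactly what the proof of Theorem~\ref{t:POA-SUM-P-GEQ4} establishes, the reduction goes through without modification, and the proof can be stated in two or three sentences.
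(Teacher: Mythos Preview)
Your proposal is correct and follows essentially the same approach as the paper: the paper's entire proof is the one-sentence monotonicity observation that a pessimistic player who is unhappy in $G$ is also unhappy as a weakly pessimistic player, followed by an appeal to Theorem~\ref{t:POA-SUM-P-GEQ4}. Your write-up simply spells out the contrapositive and the $\PoA=1$ conclusion more explicitly than the paper does.
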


On the other hand, 
the diameter of any max-swap equilibrium is smaller than three 
for weakly pessimistic players with $3$-local information. 
\begin{theorem}
\label{t:POA-MAX-WP-GEQ4}
When $k \geq 3$, % and $n \geq 4$, 
for weakly pessimistic players with $k$-local information, 
the diameter of a max-swap equilibrium is at most two 
and $\PoA_{\text{MAX}}(n,n-1,k)=1$. 
\end{theorem}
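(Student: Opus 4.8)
The plan is to prove the diameter bound by showing that \emph{every} tree of diameter at least three contains a weakly pessimistic unhappy player, for every $k\ge 3$; since a max-swap equilibrium has $m=n-1$ edges it is a tree, so this forces its diameter to be at most two, i.e.\ (for $n\ge 3$) it is the star $K_{1,n-1}$. As the star is the unique minimum-social-cost tree on $n$ vertices and is itself a max-swap equilibrium (a leaf's only available swap moves it to distance three from the other leaves, so $\Delta<0$ already for $H$ equal to the star itself, and the center cannot swap without disconnecting the graph), the numerator and denominator of $\PoA_{\text{MAX}}(n,n-1,k)$ coincide, giving the value $1$. Note that this is strictly stronger than the pessimistic bound of Theorem~\ref{t:POA-MAX-P-GEQ4}: there the analogous swap is only a \emph{non-strict} improvement, which a pessimistic player rejects but a weakly pessimistic player accepts — this is exactly what lets us push the diameter down from three to two, and also down to $k=3$.

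First I would establish a structural lemma, a strengthening of Lemma~\ref{l:PATH-MAX-P-GEQ-4} from ``diameter $>3$'' to ``diameter $\ge 3$'': in any tree $G$ of diameter at least three there is a path $P=vabw$ such that $v$ is a leaf of $G$ and every neighbor of $a$ other than $b$ is a leaf (equivalently, the depth of $T_{G,P}(a)$ is at most one). The proof is the same idea as before: take a longest path $v_0 v_1\cdots v_\ell$ (so $\ell\ge 3$), set $v=v_0$, $a=v_1$, $b=v_2$, $w=v_3$; maximality forces $v_0$ to be a leaf of $G$ and forces every neighbor of $v_1$ other than $v_2$ to be a leaf, and $v_2$ has the neighbor $v_3\ne v_1$.

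The core step is the worst-case cost computation for the swap at $v$ that deletes $\{v,a\}$ and creates $\{v,b\}$ (available since $d_G(v,b)=2\le k$). For $k=3$, $v$'s view is pinned up to distance $2$ — namely $v$, $a$, the leaf neighbors of $a$, and $b$ — while the frontier vertices (the neighbors of $b$ other than $a$, all at distance $3$) may carry arbitrary dangling subtrees in a compatible global graph $H$. When $H$ is a tree, writing $D\ge 0$ for the maximal depth of those dangling subtrees, one gets $c_v(H)=3+D$ and, after the swap, $c_v(H')=\max\{3,2+D\}$, the $3$ coming from the leaf neighbors of $a$, which sit at distance $3$ from $v$ in $H'$ (if $a$ has no such leaf, $c_v(H')=2+D$ and the gain is even larger). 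Hence $c_v(H)-c_v(H')\ge 0$ for every tree $H$; for a general (possibly non-tree) compatible $H$ the same conclusion follows from a short eccentricity argument — since $v$ is pendant, $c_v(H)-c_v(H')$ equals $\mathrm{ecc}_{H\setminus\{v\}}(a)-\mathrm{ecc}_{H\setminus\{v\}}(b)$, and because $a$'s only non-pendant neighbor is $b$ while $b$ sees the pinned leaf neighbors of $a$ at distance $2$, one checks $\mathrm{ecc}_{H\setminus\{v\}}(a)\ge\mathrm{ecc}_{H\setminus\{v\}}(b)$. Therefore $\Delta_v(a,b)=\min_{H}(c_v(H)-c_v(H'))\ge 0$, so $v$ is unhappy. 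For $k\ge 4$ the set of compatible graphs only shrinks while this swap remains available, so $\Delta_v(a,b)$ can only increase and $v$ is still unhappy. This completes the contradiction: no tree of diameter $\ge 3$ is a max-swap equilibrium.

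The main obstacle is precisely the tight case $D=0$ (equivalently, when $G$ itself has diameter three) in that core computation: there $c_v(H)=c_v(H')=3$, so $\Delta_v(a,b)=0$ — the improvement is non-strict, which is why the statement is false for pessimistic players and why one must verify carefully that no compatible global graph $H$ makes $c_v(H')$ strictly exceed $c_v(H)$. The only delicate point in that verification is correctly identifying which vertices are ``pinned'' by the $k$-local view (those within distance $k-1$, in particular the leaf neighbors of $a$, which move to distance $3$ after the swap and exactly compensate the one-step gain on the frontier) versus which can be pushed arbitrarily far (only the distance-$k$ frontier), together with handling the possibility that $H$ is not a tree; the rest is routine.
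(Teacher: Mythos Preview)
Your proof is correct and follows essentially the same approach as the paper: both find a length-three path $P=vabw$ with $v$ a leaf and $T_{G,P}(a)$ of depth at most one, argue that the swap $(a,b)$ at $v$ satisfies $\Delta_v(a,b)\ge 0$ so $v$ is weakly-pessimistically unhappy, and then verify that the star is an equilibrium. Your longest-path argument for the structural lemma is a minor variant of the paper's descend-into-the-deep-side argument, and your explicit worst-case computation (in particular the eccentricity identity $c_v(H)-c_v(H')=\mathrm{ecc}_{H\setminus\{v\}}(a)-\mathrm{ecc}_{H\setminus\{v\}}(b)$ handling non-tree compatible $H$) spells out what the paper summarizes as ``keeps the maximum distance to vertices in its view unchanged.''
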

\begin{proof}
Assume that there exists a max-swap equilibrium $G$ 
whose diameter is larger than two. 
Then, there exists a path $P = u v w x$ in $G$ that starts 
from a leaf $u$. 

If the depth of $T_{G,P}(v)$ is smaller than two, 
then $u$ is unhappy because of 
the edge swap $(v,w)$ that keeps the maximum distance to 
vertices in its view unchanged. 

If the depth of $T_{G,P}(v)$ is larger than one, 
then we can find a path $P' = u' v' w' x'$ such that 
$u'$ is a leaf and the depth of $T_{G,P'}(v')$ is at most one. 
More specifically, we choose an arbitrary leaf $u'$ 
with the largest depth in $T_{G,P}(v)$. 
There exists a path whose length is larger than three 
starting from $u'$ because there exists a path 
from $u'$ to $v$ and a path $v w x$. 
Then the depth of $T_{G,P'}(v')$ is at most one. 
If the depth of $T_{G,P'}(v')$ is zero, 
$u'$ is unhappy as mentioned above. 
If the depth of $T_{G,P'}(v')$ is one, 
$u'$ is unhappy because of the edge swap $(v', w')$ 
that keeps the maximum distance to 
any vertex in its view unchanged. 
Hence, $G$ is not a max-swap equilibrium. 

We now consider graph $G'$ whose diameter is two, 
i.e., $G'$ is a star. 
The unique inner vertex player, say $c$, is not unhappy 
because it cannot perform any edge swap since 
$V(G) \setminus N_G(c) \cup \{c\} = \emptyset$. 
Any leaf player is not unhappy 
because any edge swap changes its cost 
from two to three. 
Thus, an arbitrary star graph is a max-swap equilibrium 
and the minimum social cost of the MAX-SG is achieved 
by a star graph. 
Thus, $\PoA(n,n-1,k)=1$ for $k \geq 3$.  
\end{proof}

Finally, we consider optimistic players. 
An optimistic player $u$ with $k$-local information expects that 
a player at distance $k$ has a long path 
that $u$ cannot observe. 
Thus, $u$ always perform an edge swap to create an edge 
connecting itself to another player at distance $k$ if any. 

\begin{lemma}
\label{l:DIAM-SUMMAX-O-K}
For optimistic players with $k$-local 
information, 
the diameter of any swap equilibrium is smaller than $k$. 
\end{lemma}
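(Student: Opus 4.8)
The plan is to prove the contrapositive: if $G$ is a connected graph with $\mathrm{diam}(G) \ge k$ (and $k \ge 2$; for $k=1$ no edge swap is available, so the statement is understood only for $k\ge 2$), then some optimistic player is unhappy, so $G$ is not a swap equilibrium. First I would fix two vertices $u,z$ with $d_G(u,z) \ge k$ and, along a shortest $u$--$z$ path $u=x_0,x_1,x_2,\dots$, set $v = x_1$ and $w = x_k$. Then $d_G(u,w)=k$, so $v \in N_G(u)$ and $w \in V_{G,k}(u)\setminus(N_G(u)\cup\{u\})$ (the latter because $k\ge 2$), hence the edge swap $(v,w)$ at $u$ is admissible. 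I will show that an optimistic $u$ strictly improves under this swap, which makes it unhappy.

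Next I would exhibit a single compatible graph that witnesses the improvement, formalising the paper's intuition that the optimist assumes a long hidden path at a distance-$k$ vertex. Let $H$ be $G$ together with a fresh pendant path $p_1 p_2 \cdots p_L$ attached at $w$. The new vertices lie at distances $k+1,\dots,k+L$ from $u$, and since the pendant is a dead end it creates no shortcut among vertices of $G$; therefore $V_{H,k}(u)=V_{G,k}(u)$ and the subgraph of $H$ induced by $V_{H,k}(u)$ is exactly $u$'s view. Because ${\mathcal G}_u$ ranges over connected graphs on any finite vertex set, $H \in {\mathcal G}_u$. Let $H'$ be obtained from $H$ by the swap $(v,w)$. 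Deleting $\{u,v\}$ leaves at most two components, and since the subpath $x_1 x_2 \cdots x_k$ survives, $v=x_1$ and $w=x_k$ remain in one component, which the new edge $\{u,w\}$ joins to $u$; thus $H'$ is connected and $c_u(H')$ is finite.

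Finally I would compare the costs. In $H$ we have $d_H(u,x)=d_G(u,x)$ for $x\in V(G)$ and $d_H(u,p_i)=k+i$, while in $H'$ we have $d_{H'}(u,w)=1$ and $d_{H'}(u,p_i)=1+i$. For the SUM game the pendant path alone contributes $\sum_{i=1}^{L}((k+i)-(1+i))=L(k-1)$ to $c_{\text{SUM},u}(H)-c_{\text{SUM},u}(H')$, whereas the total change over the at most $n$ vertices of $G$ is bounded in absolute value by a quantity depending only on $G$ (each distance changes by at most $O(n)$, so the $G$-side contribution is $O(n^2)$) and in particular is independent of $L$; taking $L$ large enough gives $\Delta_{c_u}(v,w)\ge c_{\text{SUM},u}(H)-c_{\text{SUM},u}(H')\ge L(k-1)-O(n^2)>0$. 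For the MAX game, taking $L$ large makes the farthest vertex the endpoint $p_L$, so $c_{\text{MAX},u}(H)=k+L$ and $c_{\text{MAX},u}(H')=1+L$, whence $c_{\text{MAX},u}(H)-c_{\text{MAX},u}(H')=k-1>0$. In either game $\Delta_{c_u}(v,w)>0$, contradicting equilibrium, so $\mathrm{diam}(G)<k$. The main obstacle is the bookkeeping of the second and third paragraphs: I must verify that $H$ genuinely reproduces $u$'s $k$-local view (guaranteed by the dead-end pendant, which introduces no shortcut and keeps all added vertices beyond distance $k$) and that the $G$-side cost change in the SUM case stays bounded by a function of $n$ while the pendant contribution grows linearly in $L$; once these are pinned down the contradiction is immediate.
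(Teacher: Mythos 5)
Your proof is correct and takes essentially the route the paper intends: the paper states this lemma with no proof at all, offering only the preceding remark that an optimistic player imagines a long unseen path behind a vertex at distance $k$, and your pendant-path construction attached at $w$ (with $L$ chosen large enough to dominate the $O(n^2)$ change on the $V(G)$ side in the SUM case, and to make $p_L$ the farthest vertex in the MAX case) is exactly the formalization of that remark. Your caveat about $k=1$ is also well taken, since the lemma as stated implicitly assumes $k\ge 2$ (for $k=1$ no swap is admissible and every graph is an equilibrium).
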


Consequently, in a sum-swap equilibrium and a max-swap equilibrium, 
all optimistic players can observe the entire graph. 
We have the following theorem. 

\begin{theorem}
\label{t:POA-SUMMAX-O-K}
For optimistic players with $k$-local information, 
$\PoA_{\text{SUM}}(n,n-1,k)=1$  
and 
$\PoA_{\text{MAX}}(n,n-1,k)<3/2$.  
\end{theorem}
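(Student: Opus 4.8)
The plan is to leverage Lemma~\ref{l:DIAM-SUMMAX-O-K}, which tells us that any swap equilibrium for optimistic players with $k$-local information has diameter smaller than $k$, and hence every vertex lies within distance $k$ of every other vertex. Consequently, in such an equilibrium each player's view is the entire graph $G$, so an optimistic player computes its cost and improvement exactly (the worst-case and best-case global graphs both coincide with $G$ itself). This reduces the analysis to the classical SUM-SG and MAX-SG for players with \emph{global} information, and the PoA bounds then follow from the known results of Alon et al.~\cite{ADHL13} on tree swap equilibria.

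First I would make precise the reduction: fix a sum-swap equilibrium $G$ for optimistic players started from a tree (so $G$ is a tree with $n-1$ edges). By Lemma~\ref{l:DIAM-SUMMAX-O-K} the diameter of $G$ is at most $k-1$, so $V_{G,k}(u) = V$ for every $u$, meaning ${\mathcal G}_u = \{G\}$ and $\Delta_{c_u}(v,w) = c_u(G) - c_u(G')$ exactly as in the global-information model. Thus $G$ is also a (classical) sum-swap equilibrium in the sense of Alon et al. Then I would invoke their structural result that any tree swap equilibrium in the SUM-SG has diameter at most two, i.e.\ is a star; since a star is the unique minimizer of $SC$ over ${\mathcal G}(n,n-1)$, we get $\PoA_{\text{SUM}}(n,n-1,k)=1$. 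The identical argument handles the MAX-SG: any tree max-swap equilibrium (global information) has diameter at most three by Alon et al., and comparing the social cost of a diameter-three tree to that of a star gives $\PoA_{\text{MAX}}(n,n-1,k) < 3/2$ — this last ratio is exactly the constant already derived in Theorem~\ref{t:POA-MAX-P-GEQ4}, so I would simply cite that computation.

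One subtlety I would address carefully is the edge-case $k$ small relative to $n$: Lemma~\ref{l:DIAM-SUMMAX-O-K} guarantees diameter $< k$, but if $n$ is so small that every connected graph on $n$ vertices already has diameter $< k$, the statement still holds trivially; conversely if $k$ is small, the lemma forces the equilibrium to be a low-diameter graph regardless, so the conclusion that the equilibrium is a star (SUM) or has diameter at most three (MAX) is never vacuous. The main obstacle, such as it is, is not the PoA bound itself but ensuring the reduction is airtight — specifically verifying that ``optimistic with full view'' is genuinely equivalent to ``global information,'' including that an optimistic player performs an edge swap whenever $\Delta_{c_u}(v,w) > 0$ under the max (best-case) operator, which over the singleton ${\mathcal G}_u = \{G\}$ is the same as the plain cost decrease. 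Once that equivalence is stated cleanly, the theorem is an immediate corollary of Lemma~\ref{l:DIAM-SUMMAX-O-K} together with \cite{ADHL13}, and the proof is short.
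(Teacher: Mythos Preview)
Your proposal is correct and mirrors the paper's own reasoning: the paper states Lemma~\ref{l:DIAM-SUMMAX-O-K}, observes in one sentence that ``in a sum-swap equilibrium and a max-swap equilibrium, all optimistic players can observe the entire graph,'' and then simply states the theorem without further proof. Your write-up is in fact more thorough than the paper's, since you make explicit the reduction ${\mathcal G}_u = \{G\}$ and the appeal to the global-information tree-equilibrium bounds of~\cite{ADHL13} (diameter $\le 2$ for SUM, $\le 3$ for MAX), which the paper leaves entirely implicit.
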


\section{Conclusion} 

In this paper, we introduced swap games with $k$-local information 
and investigated their dynamics and PoA. 
First, we showed that when $k \geq 4$, starting from a tree, 
the SUM-SG and MAX-SG by pessimistic players with 
$k$-local information promise convergence 
to an equilibrium with constant PoA. 
In other words, in a distributed environment, 
rational participants can construct a tree of small diameter 
without global information. 

We then introduced weakly pessimistic players 
to obtain a tree equilibrium with small PoA for $k\leq 3$. 
When $k=3$, the MAX-SG achieves a constant PoA, 
at the cost of best response cycles. 
Thus, relaxing pessimism does not promise distributed 
graph construction. 
Finally, we introduced optimistic players 
and presented the constant PoA of the SUM-SG and MAX-SG 
for any value of $k$. 

There are many interesting future directions. 
One is a better upper bound and a lower bound of the 
number of edge swaps during convergence. 
Based on \cite{KL13,L11}, we put our basis on the 
potential function for a game, 
however we do not know whether our upper bound is tight or not. 
The dynamics of non-pessimistic players is also an open problem. 

Although games with imperfect information have been 
investigated in game theory, 
to the best of our knowledge, 
there are few games where each player knows 
the existence of only a part of players. 
We hope games in this form 
open up new vistas for game theory and distributed computing.

\section*{Acknowledgment}

We would like to thank precious comments by 
Shuji Kijima in Kyushu University. 

\bibliographystyle{plain}
\bibliography{papers}

\end{document}